\documentclass{article}
\usepackage[preprint]{neurips_2026}

\usepackage{amsmath,amsfonts,bm}

\def\eqref#1{equation~\ref{#1}}

\def\1{\bm{1}}

\DeclareMathAlphabet{\mathsfit}{\encodingdefault}{\sfdefault}{m}{sl}
\SetMathAlphabet{\mathsfit}{bold}{\encodingdefault}{\sfdefault}{bx}{n}

\newcommand{\E}{\mathbb{E}}

\newcommand{\R}{\mathbb{R}}

\newcommand{\KL}{D_{\mathrm{KL}}}

\usepackage{booktabs}
\usepackage{siunitx}

\usepackage{mwe}
\usepackage{wrapfig}
\usepackage{bbm}
\usepackage[english]{babel}
\usepackage{amsmath,amsfonts,amssymb, amsthm}
\usepackage{graphicx}
\usepackage[colorlinks=true, allcolors=blue]{hyperref}
\usepackage{parskip}
\usepackage{tabularx}
\usepackage{bm}
\usepackage{hhline}
\usepackage{multirow}
\usepackage{algorithm}
\usepackage{etoolbox}
\usepackage{subcaption}
\usepackage[dvipsnames]{xcolor}
\usepackage{mathtools}
\usepackage{nth}
\usepackage{diagbox}
\usepackage{float}
\usepackage{tabstackengine}
\usepackage{tablefootnote}
\usepackage{soul}
\usepackage{caption}
\usepackage{algorithm}
\usepackage{algpseudocode}

\newtheorem{theorem}{Theorem}
\newtheorem{proposition}{Proposition}
\newtheorem{corollary}{Corollary}

\renewcommand{\eqref}[1]{(\ref{#1})}

\newcommand{\N}{\mathcal{N}}
\renewcommand{\KL}{\mathrm{D}_{\mathrm{KL}}}

\renewcommand{\R}{\mathbb{R}}
\renewcommand{\E}{\mathbb{E}}

\renewcommand{\d}{\mathrm{d}}

\begin{document}

\title{Stochastic Transition-Map Distillation for\\ Fast Probabilistic Inference}

\author{%
  George Rapakoulias $^1$
  \And
  Peter Garud $^{2}$
  \And
  Lingjiong Zhu $^3$ 
  \And
  Panagiotis Tsiotras $^1$ 
  \\
  \\
  $^1$ Department of Aerospace Engineering, Georgia Institute of Technology, Atlanta, GA \\
  $^2$ Department of Computer Science, Georgia Institute of Technology, Atlanta, GA \\
  $^3$ Department of Mathematics, Florida State University, Tallahassee, FL
}

\maketitle
\begingroup
\renewcommand{\thefootnote}{}
\footnotetext{Corresponding authors: \texttt{grap@gatech.edu, lzhu2@fsu.edu, tsiotras@gatech.edu}}
\addtocounter{footnote}{-1}
\endgroup

\begin{abstract}
Diffusion models achieve strong generation quality, diversity, and distribution coverage, but their performance often comes with expensive inference.
In this work, we propose \textit{Stochastic Transition-Map Distillation} (STMD), a teacher-free framework for accelerating diffusion model inference while preserving probabilistic sample generation.
In contrast to score-based diffusion models, whose denoising parametrization models the mean of the posterior distribution, STMD distills the full transition map associated with the sampling stochastic differential equation (SDE).
We parameterize these SDE transitions with a conditional Mean Flow model, yielding a one- or few-step stochastic sampler that retains the transition structure of the underlying diffusion process.
This perspective is especially useful for downstream tasks that require stochastic inference, such as diffusion posterior sampling, inverse problems, and energy-based fine-tuning.
Compared to recent distillation methods, STMD requires no pretrained teacher, bi-level optimization, or trajectory simulation and caching, enabling efficient and scalable training.
We derive convergence bounds for our method in the Wasserstein distance, providing a strong theoretical foundation for our approach, and validate STMD on various image generation examples on the MNIST, CIFAR-10, and CelebA datasets.
\end{abstract}

%
\section{Introduction and Motivation}

Continuous-time diffusion models excel in generation quality and sample diversity compared to their predecessors, such as Generative Adversarial Networks (GANs) and Variational Autoencoders (VAEs) \citep{goodfellow2014GAN,goodfellow2020generative, arjovsky2017wasserstein, kingma2014VAE,kingma2019introduction}; 
however, this performance increase comes at the expense 
of inference complexity, stemming from the need to integrate the underlying continuous-time ordinary differential or stochastic differential equations.
Although many techniques have been proposed to accelerate inference, such as distillation \citep{luhman2021knowledge, song2023consistency} and Optimal Transport (OT) based approaches~\citep{liu2022flow, shi2023diffusion}, to name a few, achieving the optimal tradeoff between generation quality and sample diversity versus inference complexity is still unresolved~\citep{dieleman2024distillation}.

Flow Matching (FM) and its variants \citep{liu2022flow, lipman2022flow, albergo2023building} have emerged as a powerful tool for training continuous-time flow models.
They work by approximating the drift term (also referred to as the policy) of an Ordinary Differential Equation (ODE) or Stochastic Differential Equation (SDE) as a mixture of elementary point-to-point conditional drift terms.
Implementing Flow Matching in practice, however, requires a temporal discretization of the learned continuous-time ODE/SDE with a very small time step size, slowing down inference~\citep{shi2023diffusion, lipman2022flow}.

Distillation techniques offer a set of tools to accelerate inference by learning a flow map of the corresponding ODE directly \citep{song2023consistency, boffi2025build, Boffi2025flow, geng2025mean, Geng-easy-2025, salimans_progressive_2022}.
At large, these methods leverage different techniques to ``learn to integrate'' an underlying continuous-time ODE.
Although they achieve high-quality results, one disadvantage is that they are usually used only with deterministic inference, that is, given an initial noise sample, the generated sample is unique.
Many downstream tasks, however, benefit from stochastic inference to boost generation quality, improve model alignment with user queries \citep{holderrieth_diamond_2026}, or perform diffusion posterior inference based on noisy or partial sample observations \citep{chung_diffusion_2024}. 
In this direction, recent works have explored training one- or few-step generators to match the distribution of a diffusion model \citep[Section 10.2]{lai2025principles} \citep{yin2024one, gushchin2025inverse, peng_noise_2025}. 

In this work, we take a different approach and propose a method for learning the transition probabilities of a given SDE, thereby generalizing ODE-based distillation approaches to the stochastic regime.
We achieve this by using a conditional Mean Flow, which learns a single or few-step generator for the transition probability of a diffusion model. 

The contributions of this paper are as follows: 
\begin{itemize}
    \item We propose STMD, a distillation framework based on conditional Mean Flows, enabling us to learn the transition maps of high-dimensional diffusion SDEs using a single-step or a few-step generator through a regression objective, and without using any pretrained model.

    \item We provide a theoretical convergence analysis in the 2-Wasserstein distance for the Mean Flow algorithm, and then extend it to study the convergence of our learned transition map to that of the underlying diffusion process. To the best of our knowledge, this is the first Mean Flow convergence analysis in the 2-Wasserstein distance.     

    \item We illustrate this method in various scenarios related to image generation and inpainting problems, showing strong empirical performance.
\end{itemize}

\section{Preliminaries}

\subsection{Flow Matching with Straight-Line Interpolants} \label{sec_flow_matching}

We first recall the basic Flow Matching formulation \citep{liu2022flow, lipman2022flow, albergo2023building} with straight-line interpolants.
All distributions are supported in the $d$-dimensional Euclidean space, denoted $\R^d$.
To align with the recently proposed Mean Flow model \citep{geng2025mean}, we will use $z_0 \sim \rho_{0}$ to denote the data distribution, and $z_1 \sim \rho_1$ to denote a sample from a simple prior distribution, e.g., a Gaussian distribution. 
We denote the densities of flow-related variables with $\rho$, and use the variable $s$ to denote time. 
Unless stated otherwise, all differential equations run backward in time, that is, from $s=1$ to $ s=0$.
For $s \in [0,1]$, define the straight-line interpolant 
\begin{equation}
    z_s = (1-s)z_0 + s z_1 .
\end{equation}
The corresponding conditional velocity is constant along the path and is given by
\begin{equation}
    v_s( z_s | z_0,z_1) = z_1-z_0 .
\end{equation}
A velocity field that transports $\rho_0$ to $\rho_1$ and vice versa can be obtained by averaging this conditional velocity over all pairs $(z_0,z_1) \sim \rho_0 \times \rho_1$ that could have produced the same intermediate point $z_s$:
\begin{equation} \label{mixture_velocity}
    v_s(z) = \E \left[ z_1-z_0 \mid z_s=z\right].
\end{equation}
The Flow Matching objective trains a neural network $v^\theta_s(z)$ to approximate \eqref{mixture_velocity} using a finite-sample approximation of the conditional flow matching objective \citep{lipman2022flow, liu2022flow}:
\begin{equation} \label{CFM_objective}
    \mathcal L_{\mathrm{CFM}} = \E \left[ \left\| v^\theta_s(z_s) - (z_1-z_0) \right\|^2 \right].
\end{equation}
This conditional objective is equivalent, in terms of its optimizer, to regression against the marginal velocity $v_s(z)$ defined in \eqref{mixture_velocity}. 
Indeed, by expanding \eqref{CFM_objective} and using the tower property of conditional expectation,
\begin{equation}
    \E \left[ \left\| v^\theta_s(z_s) - (z_1-z_0) \right\|^2 \right] =
    \E \left[ \left\| v^\theta_s(z_s) - v_s(z_s) \right\|^2 \right]+C,
\end{equation}
where $C$ is independent of $\theta$, motivating the use of CFM objective \eqref{CFM_objective}.

At inference time, samples are generated by integrating the learned ODE backwards from $s=1$ to $s=0$:
\begin{equation}\label{learned_flow}
    \dot z_s = v^\theta_s(z_s).
\end{equation}

\subsection{Consistency Models and Mean Flows} \label{sec_meanflow}

\begin{figure}[!ht]
    \centering
    \begin{subfigure}{0.3\linewidth}
        \centering
        \includegraphics[width=\linewidth]{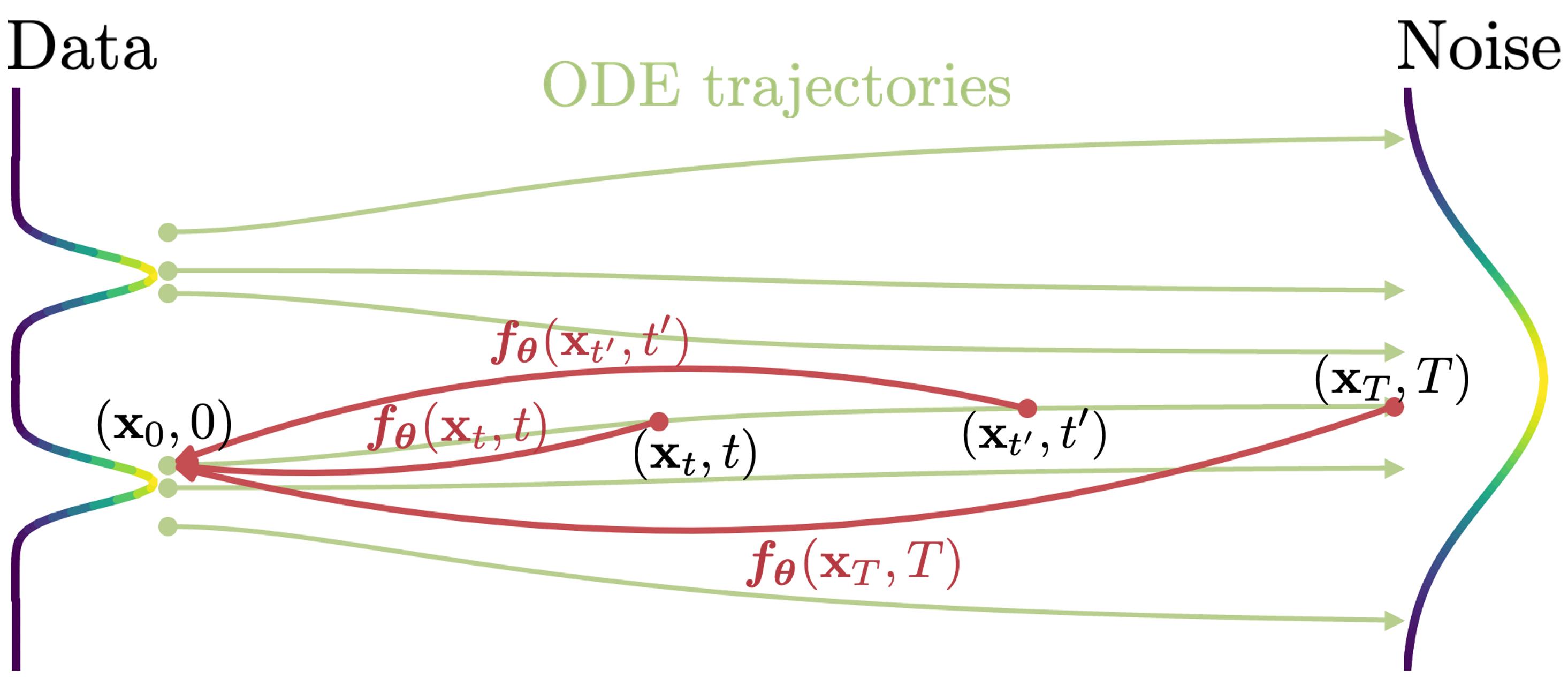}
        \caption{}
        \label{fig:consistency_model}
    \end{subfigure}
    \hfill
    \begin{subfigure}{0.68\linewidth}
        \centering
        \includegraphics[width=\linewidth]{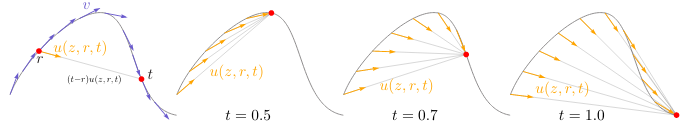}
        \caption{}
        \label{fig:meanflow}
    \end{subfigure}

    \caption{(a): Consistency training learns the flow map of an ODE without requiring sampling of simulated trajectories during training. Figure taken from \cite{song2023consistency}. (b): Mean Flows parametrize the flow map using the average velocity along a trajectory, allowing for taking exact, finite-time steps during inference. Figure taken from \cite{geng2025mean}.}
    \label{fig:cm_meanflow_side_by_side}
\end{figure}

A limitation of flow models is that during inference, the learned flow \eqref{learned_flow} needs to be integrated.
To accelerate inference, distillation approaches aim at learning the flow map of the inference ODE:
\begin{equation} \label{fm_ode}
    \dot z_s = v_s(z_s). 
\end{equation}
Seminal works in this direction include \cite{song2023consistency, kim2023consistency}, which are based on the probability flow ODE to model $v_s$ in \eqref{fm_ode}, and \cite{boffi2025build, geng2025mean}, that leverage a flow-matching ODE and use a drift of the form \eqref{mixture_velocity}.

A recent work is the Mean Flow (MF) model \citep{geng2025mean}, which learns the flow map of \eqref{learned_flow} without requiring a pretrained reference model $v^\theta_s(z_s)$.
Consider an ODE~\eqref{fm_ode} with drift as in~\eqref{mixture_velocity}. 
Let $0 \leq r  < s \leq 1$. 
Consider a trajectory starting from $z_s$ at time $s$, denoted with $\{z_\tau, \, \tau \in [r, s]\}$. 
The MF algorithm is based on the concept of average velocity along the flow lines of \eqref{learned_flow}, i.e., the quantity: 
\begin{equation} \label{mean_flow_property}
    u(z_s, r, s) = \frac{1}{s-r} \int_r^s v_\tau(z_\tau) \, \d \tau. 
\end{equation}
Since, by definition of a solution of the ODE \eqref{learned_flow}, 
\begin{equation}
    z_s =  z_r  -  \int_r^s v_\tau(z_\tau) \, \d \tau = z_r - (s-r) \, u(z_s, r, s),
\end{equation}
so that a model for $u(z_s, r, s)$ fully describes the flow map of \eqref{learned_flow}.  
Rearranging and differentiating \eqref{mean_flow_property} with respect to time $s$ gives 
\begin{equation} \label{mean_flow}
    u(z_s, r, s) = v_s(z_s) - (s-r) \frac{\d }{\d s}u(z_s, r, s),
\end{equation}
while
\begin{equation} \label{total_derivative}
     \frac{\d }{\d s}u(z_s, r, s) = \frac{\partial u}{ \partial z} v_s(z_s)  + \frac{\partial u}{\partial s}. 
\end{equation}

The MF model parametrizes the mean velocity in Equation $\eqref{mean_flow_property}$ with a neural network, and is trained through the MF objective function, which is a square-norm deviation penalty based on \eqref{mean_flow}:
\begin{equation} \label{MF}
    \mathcal{L}_{\mathrm{MF}} = \E\left[ \left\| u^{\theta}(z_s, r, s) -\texttt{sg}\!\left( v_s(z_s) - (s-r)\frac{\d}{\d s} u^{\theta}(z_s, r, s) \right) \right\|^2 \right].
\end{equation}
The \texttt{sg} operator in \eqref{MF} denotes the stop-grad operator commonly used in the literature \citep{song2023consistency,Song-2024,Frans2025,Geng-easy-2025,Lu-2025,geng2025mean}, and is used to stabilize training by avoiding having to backpropagate through the numerical calculation of the Jacobian-vector product related to Equation \eqref{total_derivative}.
Since the expression for $v_{s}(z_s) = \E[z_1 - z_0|z_s] $ is not known a priori, similarly with flow matching, the objective \eqref{MF} is replaced with a tractable conditional objective, and the MF model is trained instead with:
\begin{equation}\label{CMF}
    \mathcal{L}_{\mathrm{CMF}} = \E\left[ \left\| u^{\theta}(z_s, r, s) -\texttt{sg}\!\left( (z_1 - z_0) - (s-r)\frac{\d}{\d s} u^{\theta}(z_s, r, s) \right) \right\|^2 \right].
\end{equation}
The following proposition guarantees the equivalence of minimizing $\mathcal{L}_{\mathrm{MF}}$ in \eqref{MF} and $\mathcal{L}_{\mathrm{CMF}}$ in \eqref{CMF}.

\begin{proposition} \label{MFprop}
    The loss function $\mathcal{L}_{\mathrm{MF}}$  in \eqref{MF} and the loss function   $\mathcal{L}_{\mathrm{CMF}}$ given in \eqref{CMF}, have the same gradient with respect to the model parameters, that is  $\nabla_{\theta} \mathcal{L}_{\mathrm{CMF}} = \nabla_\theta \mathcal{L}_{\mathrm{MF}}$. 
\end{proposition}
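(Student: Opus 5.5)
The argument is a direct computation of both gradients under the stop-gradient convention, followed by the tower property of conditional expectation, in the same way the CFM objective \eqref{CFM_objective} was compared with regression against \eqref{mixture_velocity}. First I fix what the \texttt{sg} operator means formally. The argument of \texttt{sg} is treated as a constant with respect to $\theta$ when differentiating, and is then evaluated at the current parameters. Write $D^\theta := \frac{\d}{\d s} u^\theta(z_s,r,s)$, computed along the marginal velocity as in \eqref{total_derivative}, and set
\begin{equation*}
    T_{\mathrm{MF}} := v_s(z_s) - (s-r)D^\theta, \qquad T_{\mathrm{CMF}} := (z_1-z_0) - (s-r)D^\theta.
\end{equation*}
Differentiating under the expectation (assuming enough regularity and integrability of $u^\theta$ and $\nabla_\theta u^\theta$ to interchange), the chain rule gives
\begin{equation*}
    \nabla_\theta \mathcal{L}_{\mathrm{MF}} = 2\,\E\!\left[ (\nabla_\theta u^\theta)^\top \bigl(u^\theta - T_{\mathrm{MF}}\bigr)\right], \qquad
    \nabla_\theta \mathcal{L}_{\mathrm{CMF}} = 2\,\E\!\left[ (\nabla_\theta u^\theta)^\top \bigl(u^\theta - T_{\mathrm{CMF}}\bigr)\right].
\end{equation*}
Here $u^\theta$ and its Jacobian are evaluated at $(z_s,r,s)$.

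Subtracting the two expressions, the terms containing $u^\theta$ and $D^\theta$ cancel identically. The difference reduces to
\begin{equation*}
    2\,\E\!\left[ (\nabla_\theta u^\theta(z_s,r,s))^\top \bigl( (z_1-z_0) - v_s(z_s)\bigr)\right].
\end{equation*}
For this cancellation it matters that the same $D^\theta$ appears in both targets. In the CMF objective \eqref{CMF}, the total derivative must therefore be understood as the one along the marginal field, or as a quantity depending only on $(z_s,r,s,\theta)$. If one instead uses the per-sample conditional version $\partial_z u^\theta\,(z_1-z_0)+\partial_s u^\theta$, which is what is done in practice, that term is linear in $z_1-z_0$. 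It is then absorbed into the same conditional-expectation argument below, since $\E[\partial_z u^\theta (z_1-z_0)\mid z_s]=\partial_z u^\theta\, v_s(z_s)$. I would handle both readings at once by noting that every appearance of $z_1-z_0$ in the CMF target enters affinely, with coefficients that are functions of $(z_s,r,s)$ only.

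The final step is to condition on $(z_s,r,s)$. Here $r$ and $s$ are sampled independently of $(z_0,z_1)$, so conditioning on them together with $z_s$ leaves the conditional law of $z_1-z_0$ given $z_s$ unchanged. The factor $\nabla_\theta u^\theta(z_s,r,s)$ is measurable with respect to this conditioning, so the tower property gives
\begin{equation*}
    \E\!\left[ (\nabla_\theta u^\theta)^\top \bigl( \E[z_1-z_0 \mid z_s] - v_s(z_s)\bigr)\right] = 0,
\end{equation*}
using the definition \eqref{mixture_velocity}. This yields $\nabla_\theta \mathcal{L}_{\mathrm{CMF}} = \nabla_\theta \mathcal{L}_{\mathrm{MF}}$.

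I expect the main obstacle to be the rigorous treatment of the stop-gradient and of the total derivative inside the conditional target, not the tower-property computation itself. The claim concerns gradients rather than losses: with \texttt{sg}, the two losses differ by a non-constant, $\theta$-dependent cross term, so no statement of the form ``loss plus constant'' holds. I would therefore state explicitly that the gradient is defined as the semi-gradient above, and verify the affine dependence on $z_1-z_0$ carefully. Integrability, for example finite second moments of $\rho_0$ and $\rho_1$ and bounded Jacobians of $u^\theta$, would be stated as a standing assumption to justify the interchange of derivative and expectation.
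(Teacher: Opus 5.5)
Your proof is correct, but it is organized differently from the paper's. The paper works with loss values. It adds and subtracts $v_s(z_s)$ inside the \texttt{sg} and expands the square into three pieces: $\mathcal{L}_{\mathrm{MF}}$, the $\theta$-independent term $\E\|(z_1-z_0)-v_s(z_s)\|^2$, and a cross term pairing $u^\theta-\texttt{sg}(T_{\mathrm{MF}})$ with $(z_1-z_0)-v_s(z_s)$. It drops the cross term, which vanishes by the tower property, and differentiates $\mathcal{L}_{\mathrm{CMF}}=\mathcal{L}_{\mathrm{MF}}+C$. You instead compute the two semi-gradients and kill their difference directly. The underlying fact is the same in both: a $(z_s,r,s)$-measurable factor paired with $(z_1-z_0)-v_s(z_s)$ has zero expectation.

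Your version buys two things. First, it matches the \texttt{sg} semantics exactly. An identity between loss values does not by itself give equality of \texttt{sg}-semi-gradients. The paper's final step implicitly uses that the dropped cross term has zero semi-gradient, not just zero value; that semi-gradient is a multiple of $\E[(\nabla_\theta u^\theta)^\top((z_1-z_0)-v_s(z_s))]$, which vanishes by the same tower argument. Second, your argument covers the target actually trained in Algorithm~\ref{training_algo}, where the jvp uses the conditional tangent $z_1-z_0$. The paper's expansion needs the same total derivative in both targets, namely the marginal one from \eqref{total_derivative}, which is not computable in practice. In exchange, the paper's route gives the stronger loss-level identity in that marginal reading.

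One claim in your last paragraph is wrong, although your proof never uses it. The \texttt{sg} does not change loss values. When both targets contain the same $D^\theta$, the cross term $-2\E\langle u^\theta-T_{\mathrm{MF}},(z_1-z_0)-v_s(z_s)\rangle$ is zero by your own tower argument. So $\mathcal{L}_{\mathrm{CMF}}=\mathcal{L}_{\mathrm{MF}}+\E\|(z_1-z_0)-v_s(z_s)\|^2$ does hold, exactly as the paper shows.

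A $\theta$-dependent discrepancy between the losses appears only in the conditional-jvp reading, and even there it is not a cross term. The targets differ by $\Delta:=(I_d-(s-r)\partial_z u^\theta)((z_1-z_0)-v_s(z_s))$, and the cross term $-2\E\langle u^\theta-T_{\mathrm{MF}},\Delta\rangle$ still vanishes. The losses then differ by $\E\|\Delta\|^2$. This depends on $\theta$ through $\partial_z u^\theta$, but it sits entirely inside the \texttt{sg} and so contributes nothing to the semi-gradient.
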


\section{Stochastic Transition-Map Distillation}

\subsection{Unconditional Diffusion Distillation} \label{unc_stmd}
\begin{wrapfigure}{r}{0.45 \linewidth}
    \centering
    \includegraphics[width=0.99\linewidth, trim=0.2cm 0 0.2cm 1.5cm, clip ]{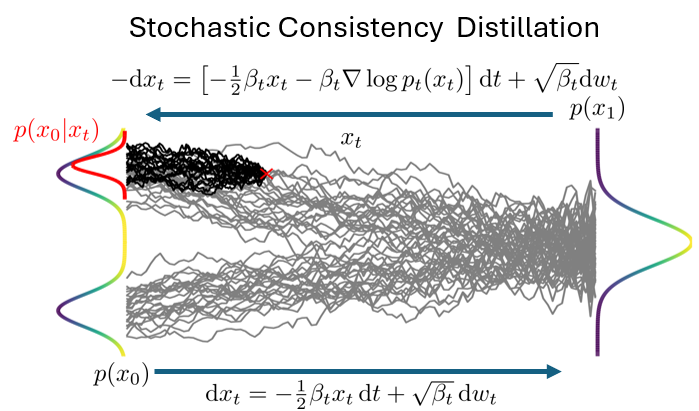}
    \caption{Stochastic Transition-Map Distillation.}
    \label{fig:STMD}
\end{wrapfigure}

Consider a forward diffusion process
\begin{equation} \label{noise_sde}
    \d x_t = - \frac{1}{2} \beta_t x_t \, \d t + \sqrt{\beta_t} \, \d w_t, \quad x_0 \sim p_0,
\end{equation}
where $w_t$ is a standard $d$-dimensional Brownian motion and $\beta_{t}\geq 0$ is a scalar function of time.
To avoid confusion with the Mean Flow variable $z_s$, we will denote probability densities associated with the diffusion variable $x_t$ using $p_t$, and denote the diffusion time with $t$.
The SDE \eqref{noise_sde} is known as the variance-preserving (VP) SDE in the literature \citep{song2020score}.
Since \eqref{noise_sde} is an Ornstein-Uhlenbeck process, the transition density function $p(x_t|x_0)$ of \eqref{noise_sde} can be parametrized in closed form:
\begin{equation}\label{noise_kernel}
    x_t = \alpha_t x_0 + \sigma_t \epsilon, \quad \epsilon \sim \N(0, I_{d}),\quad x_0 \sim p_{0},
\end{equation}
where $\alpha_t^2 := \exp( -\int_0^t \beta_\tau \d \tau)$, $\sigma_t := \sqrt{1-\alpha_t^2}$. 
Furthermore, the time reversal of \eqref{noise_sde}
is an SDE running backward in time \citep{Anderson1982,Follmer1985,song2020score,Cattiaux2023}:
\begin{equation} \label{sampling_SDE}
    -\d x_t = \left[ -\frac{1}{2} \beta_t x_t - \beta_t \nabla \log  p_t(x_t) \right] \d t + \sqrt{ \beta_t} \d w_t, \quad x_1 \sim p_1,
\end{equation}
where $\nabla \log p_t(x_t)$ is known as the score function \citep{Schervish1995}. It is easy to verify that the marginal of \eqref{sampling_SDE} matches that of \eqref{noise_sde} for all time marginals \citep{Anderson1982,song2020score}. 
Score-based diffusion models learn the score function using Tweedie's formula \citep{Efron2011}:
\begin{equation} \label{score_function}
    \nabla \log  p_t(x_t) = \frac{\alpha_t \E[x_0|x_t] - x_t}{\sigma_t^2},
\end{equation}
and by learning the mean of the posterior distribution $p(x_0|x_t)$.
Our goal, instead, is to model the weak solution of the SDE \eqref{sampling_SDE}, that is, given any point $x_t$ along the stochastic trajectories of \eqref{sampling_SDE}, and some time $t'<t$, predict the corresponding distribution for the state $x_t' \sim p(x_{t'}|x_t)$, using a single step generator, without incurring the costly numerical integration of \eqref{sampling_SDE}. We illustrate this concept in Figure \ref{fig:STMD}.

Instead of modeling $p(x_{t'}|x_t)$ directly, we will use an $x_0$-parametrization for our model
\begin{equation} \label{x0_parametrization}
    p(x_{t'}|x_t) = \int_{\mathbb{R}^{d}} p(x_0|x_t) p(x_{t'}|x_0, x_t) \, \d x_0. 
\end{equation}
Since, for all $0\leq t' < t \leq 1$, the bridge measure $p(x_{t'}|x_0, x_t)$ is available in closed form for diffusions of the form \eqref{noise_sde} using~\cite{song2020score}:
\begin{equation} \label{bridge_measure}
p(x_{t'} | x_0,x_t) = \mathcal N \left( x_{t'}; \frac{\alpha_{t'}^2-\alpha_t^2} {\alpha_{t'}\sigma_t^2}x_0 + \frac{\alpha_t\sigma_{t'}^2}{\alpha_{t'}\sigma_t^2}x_t,
\frac{\sigma_{t'}^2(\alpha_{t'}^2-\alpha_t^2)}
{\alpha_{t'}^2\sigma_t^2}I_{d}
\right),
\qquad 0<t'<t,
\end{equation}
given a parametric model $p^\theta(x_0|x_t)$, sampling from $p(x_{t'}|x_t)$ through \eqref{x0_parametrization} is as easy as sampling from $p^\theta(x_0|x_t)$.

We therefore parameterize $p^\theta(x_0|x_t)$ with a conditional Mean Flow, that is, we construct a flow model that maps the prior $z_1 \sim \N(0, I_{d})$ to samples from $z_0 \sim p(x_0|x_t)$. 
Specifically, we define a conditional flow model in the latent variable $z_s$, conditioned on $x_t$ and $t$ as follows:
\begin{equation} \label{cond_meanflow}
    u(z_s, r, s, x_t, t) = v_s(z_s| x_t, t) - (s-r) \frac{\d }{\d s}u(z_s, r, s, x_t, t),
\end{equation}
where $v_s(z_s|x_t, t) = \E[z_1 - x_0|z_s, x_t, t]$. 
We now define the diffusion MF loss function as 
\begin{equation} \label{condMF}
    \mathcal{L}_{\mathrm{dMF}} = \E \left[ \left\| u^{\theta}(z_s, r, s, x_t, t) -\texttt{sg}\!\left( v_s(z_s|x_t, t) - (s-r)\frac{\d}{\d s} u^{\theta}(z_s, r, s, x_t, t) \right) \right\|^2 \right].
\end{equation}
where the expectation in \eqref{condMF} is taken with respect to $x_0, t, x_t, r, s, z_1$. Furthermore, the conditional Mean Flow objective generalizes to 
\begin{equation} \label{condCMF}
    \mathcal{L}_{\mathrm{dCMF}} = \E\left[ \left\| u^{\theta}(z_s, r, s, x_t, t) -\texttt{sg}\!\left( z_1 - x_0 - (s-r)\frac{\d}{\d s} u^{\theta}(z_s, r, s, x_t, t) \right) \right\|^2 \right].
\end{equation}
Using a similar argument as in Proposition~\ref{MFprop}, it is easy to see that the gradients of Equations~\eqref{condMF} and \eqref{condCMF} with respect to the model parameters $\theta$ are the same.

\begin{proposition} \label{dMFprop}
    The loss function $\mathcal{L}_{\mathrm{dMF}}$  in \eqref{condMF} and the loss function   $\mathcal{L}_{\mathrm{dCMF}}$ given in \eqref{condCMF} have the same gradient with respect to the model parameters, that is,  $\nabla_{\theta} \mathcal{L}_{\mathrm{dCMF}} = \nabla_\theta \mathcal{L}_{\mathrm{dMF}}$. 
\end{proposition}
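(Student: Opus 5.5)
The plan is to write both losses as expected squared distances from $u^\theta$ to targets that differ only by a conditionally centered term, and then show the stop-grad makes the two gradients coincide. Abbreviate $u^\theta := u^\theta(z_s,r,s,x_t,t)$, $D^\theta := \frac{\d}{\d s} u^\theta(z_s,r,s,x_t,t)$, and let $\mathcal{F} := \sigma(z_s, r, s, x_t, t)$. Write the two stopped targets as
\begin{equation*}
T_{\mathrm{dMF}} = v_s(z_s|x_t,t) - (s-r)D^\theta, \qquad T_{\mathrm{dCMF}} = z_1 - x_0 - (s-r)D^\theta .
\end{equation*}
Because the $\texttt{sg}$ operator treats its argument as a constant when differentiating, we get
\begin{equation*}
\nabla_\theta \mathcal{L}_{\mathrm{dMF}} = 2\,\E\big[(\nabla_\theta u^\theta)^\top (u^\theta - T_{\mathrm{dMF}})\big], \qquad \nabla_\theta \mathcal{L}_{\mathrm{dCMF}} = 2\,\E\big[(\nabla_\theta u^\theta)^\top (u^\theta - T_{\mathrm{dCMF}})\big].
\end{equation*}
Here the targets are evaluated at the current $\theta$ but receive no derivative. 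I will also assume enough regularity (bounded second moments, a network smooth in $\theta$) to pass $\nabla_\theta$ under the expectation.

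Next, take the difference of the two gradients. It equals
\begin{equation*}
2\,\E\big[(\nabla_\theta u^\theta)^\top (z_1 - x_0 - v_s(z_s|x_t,t))\big].
\end{equation*}
The $D^\theta$ terms cancel exactly, and this is where the stop-grad matters: without it, the $D^\theta$ term would contribute a Jacobian-of-JVP gradient that appears identically in both losses anyway, so even then it would cancel in the difference. Now apply the tower property, conditioning on $\mathcal{F}$. The factor $\nabla_\theta u^\theta$ is $\mathcal{F}$-measurable, since it depends only on $(z_s,r,s,x_t,t)$ and $\theta$. Moreover, by the definition $v_s(z_s|x_t,t)=\E[z_1-x_0\mid z_s,x_t,t]$, the inner conditional expectation $\E[z_1-x_0-v_s\mid\mathcal{F}]$ vanishes. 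Hence the difference is zero, which proves $\nabla_\theta\mathcal{L}_{\mathrm{dCMF}}=\nabla_\theta\mathcal{L}_{\mathrm{dMF}}$. This argument mirrors the one behind Proposition~\ref{MFprop}, with the conditioning variables $(x_t,t)$ added.

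The main obstacle is making the sampling structure precise so that the conditional expectation is justified. Specifically, $(r,s)$ must be sampled independently of $(z_1,x_0)$ given $(x_t,t)$, and $z_s=(1-s)x_0+sz_1$ must be built from $x_0\sim p(x_0|x_t)$, which is the joint draw of $(x_0,t,x_t)$ followed by $z_1\sim\N(0,I_d)$. Then conditioning on $(z_s,x_t,t)$ also at fixed $(r,s)$ gives the same $v_s$: $r$ is independent given $s$, so including $r$ in $\mathcal{F}$ does not change the conditional mean. I would state this independence explicitly and note integrability: $x_0$ must have a finite second moment so that all terms are in $L^2$.
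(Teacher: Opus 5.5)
Your proof is correct, but it is organized differently from the paper's. The paper works at the level of loss values. It adds and subtracts $v_s(z_s|x_t,t)$ inside the stop-grad, expands the square, and drops the cross term (implicitly by the same tower-property fact you use). This gives $\mathcal{L}_{\mathrm{dCMF}}=\mathcal{L}_{\mathrm{dMF}}+\E\|(z_1-x_0)-v_s(z_s|x_t,t)\|^2$, which it then differentiates. You instead differentiate first, treating both targets as constants, and show directly that the two semi-gradients differ by $2\E[(\nabla_\theta u^\theta)^\top(z_1-x_0-v_s)]=0$.

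The paper's route gives a stronger statement: the losses agree up to an additive constant, which also identifies the irreducible floor of $\mathcal{L}_{\mathrm{dCMF}}$. However, its step ``take gradients of both sides'' of an identity containing $\texttt{sg}$ tacitly needs the non-stopped part of the cross term, $\E\langle u^\theta,(z_1-x_0)-v_s\rangle$, to vanish for every $\theta$. That is exactly your computation. Your version makes this explicit, together with the independence of $(r,s)$ and the integrability conditions.

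Your route is also more robust. If $\frac{\d}{\d s}u^\theta$ in \eqref{condCMF} is computed with JVP tangent $z_1-x_0$, as in Algorithm~\ref{training_algo}, the $D^\theta$ terms no longer cancel exactly. The target difference becomes $\bigl(I_d-(s-r)\frac{\partial u^\theta}{\partial z}\bigr)(z_1-x_0-v_s)$, which is still conditionally centered given $\mathcal{F}$, so your argument survives with a one-line change. The paper's ``constant'' would then depend on $\theta$ and is harmless only because it sits entirely inside the stop-grad.

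One minor correction concerns your parenthetical about dropping the stop-grad. There the extra term $2(s-r)(\nabla_\theta D^\theta)^\top(u^\theta-T)$ multiplies different residuals in the two losses, so it does not simply cancel. Its contribution to the difference is $2(s-r)(\nabla_\theta D^\theta)^\top(z_1-x_0-v_s)$, and this vanishes only by the same tower argument.
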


To train the model using \eqref{condCMF}, we only need samples from the joint distribution of $p(x_0, x_t)$. 
Since sampling from $p(x_0|x_t)$ requires simulations of \eqref{sampling_SDE}, we use the reverse kernel $p(x_t|x_0)$, which is known in closed form, along with samples from the data distribution $p(x_0)$.
We present an overview of the training and sampling algorithms for our conditional Mean Flow model in Algorithm~\ref{training_algo} and Algorithm~\ref{inference_algo}, respectively. 

\begin{algorithm}[!ht] 
\caption{STMD training} \label{training_algo} 
\hspace*{\algorithmicindent}\textbf{Input:} Data distribution samples $p_0$, forward SDE scheduler $\{\beta_t,\alpha_t, \sigma_t, \, t\in[0,1]\}$, conditional Mean Flow model parametrization $u^\theta(z, r, s, x, t)$. 
\begin{algorithmic} 
\State Sample $x_0 \sim p_0$, $t \sim \texttt{unif}(0,1)$, $x_t$ through \eqref{noise_kernel}, $z_1 \sim \N(0, I_d)$, $r, s \sim \texttt{sample\_r\_s()}$\footnotemark.
\State $z_s \leftarrow (1-s) x_0 + s z_1$
\State $u^\theta, \frac{\d u^\theta}{\d s} \leftarrow \texttt{jvp}(u^\theta, \, (z_s, r,s, x_t, t), \, (z_1 - x_0), 0, 1, 0, 0)$ 
\State Compute CMF loss $\mathcal{L}_{\mathrm{dCMF}}$ using Equation \eqref{condCMF}. 
\State Take optimization step on $\mathcal{L}_{\mathrm{dCMF}}$. 
\State \Return Trained $u^\theta(z_s, r, s, x_t, t)$
\end{algorithmic} 
\end{algorithm}
\footnotetext{We refer the reader to the Appendix \ref{App_exp_details} for the detailed sampling strategy for $r,s$.}

\begin{algorithm}[!ht] 
\caption{STMD inference }\label{inference_algo} 
\hspace*{\algorithmicindent}\textbf{Input:} Trained model $u^\theta(z_s, r, s, x_t, t)$, initial samples $x_1 \sim \N(0, I_d)$, number of inference steps $n_{\mathrm{inf}}$, number of Mean Flow steps $n_{\mathrm{mf}}$. 
\begin{algorithmic}
\State $\Delta t \leftarrow \frac{1}{n_\mathrm{inf}}$, $\Delta s \leftarrow \frac{1}{n_\mathrm{mf}}$, $t \leftarrow 1$,
\For{$k$ in $\mathtt{range}(n_{\mathrm{inf}})$}:
    \State $z_1 \sim \N(0, I_d), \, s \leftarrow 1$
        \For{$i$ in $\mathtt{range}(n_{\mathrm{mf}})$}: 
            \State $z_{s - \Delta s} \leftarrow z_s - \Delta s \,  u^\theta(z_s, s - \Delta s, s, x_t, t)$
            \State $s \leftarrow s - \Delta s$
        \EndFor
    \State $x_{0} \leftarrow z_0$
    \State $x_{t-\Delta t} \sim p(x_{t- \Delta t}|x_0, x_t)$ from Equation \eqref{bridge_measure}
    \State $t \leftarrow t - \Delta t $
\EndFor
\State
\Return $x_0$
\end{algorithmic} 
\end{algorithm}



\subsection{Convergence Theory}

In this section, we obtain Wasserstein convergence guarantees for the
Mean Flow model, and then generalize them to show that our conditional Mean Flow model converges to the transition kernel of the reverse SDE \eqref{sampling_SDE}. To the best of our knowledge, this is the first result showing convergence of the Mean Flow algorithm in the Wasserstein distance. 

Let $\mathcal{P}_{2}(\mathbb{R}^d)$ be the space of all Borel probability measures on $\R^d$ with finite second moment (with respect to the Euclidean norm). For any two Borel probability measures $\mu,\nu$ in $\mathcal{P}_{2}(\R^d)$, the 2-Wasserstein distance between $\mu$ and $\nu$ is defined as \citep{villani2008optimal}:
$\mathcal{W}_2^2(\mu,\nu):=\inf_{\gamma\in\Pi(\mu,\nu)}\int_{\R^d\times\R^d}\|x-y\|^2\,\d\gamma(x,y)$,
where $\Pi(\mu,\nu)$ is the set of couplings of $(\mu,\nu)$. 

We will first study the convergence of the unconditional Mean Flow model, presented in Section~\ref{sec_meanflow}. 
Recall that sampling using the Mean Flow model can be achieved via:
\begin{equation}
z_{r}=z_{s}-(s-r)u(z_{s},r,s).
\end{equation}
In the case of single-step sampling, we have $z_{0}=z_{1}-u(z_{1},0,1)$ 
with $z_{1}\sim \rho_{1}$.
Next, we obtain the Wasserstein convergence guarantees
for the Mean Flow model for the single-step generator 
$\hat{z}_{0}=z_{1}-\hat{u}(z_{1},0,1)$
with $z_{1}\sim \rho_{1}$, where $\hat{u}$ is a candidate mean velocity field.
First, given the candidate model $\hat{u}$, and for fixed $r, s$ we define the $(r,s)$-conditional Mean Flow error, as 
\begin{equation}
\varepsilon_{r, s} := \E_{z_s}\left[ \left\| \hat u (z_s, r, s) -\texttt{sg}\!\left( v_s(z_s) - (s-r)\frac{\d}{\d s} \hat u (z_s, r, s) \right) \right\|^2 \right].
\end{equation}
Note the Mean Flow loss reduces to $\mathcal{L}_{\mathrm{MF}}= \E_{r,s}[\varepsilon_{r,s}]$. 
We now state our main convergence theorem.

\begin{theorem}\label{thm:mean:flow:convergence}
Let $\hat{\rho}_{0}$ denote the single-step generated distribution 
of $\hat{z}_{0}=z_{1}-\hat{u}(z_{1},0,1)$ with $z_{1}\sim \rho_{1}$, 
where $\hat{u}$ is a candidate mean velocity field. 
Assume that $\E_{s} [ \varepsilon_{0, s}] \leq \varepsilon$. 
Then, 
\begin{equation}
\mathcal{W}_{2}^{2}(\hat{\rho}_{0},\rho_{0})\leq e\varepsilon.  
\end{equation}
\end{theorem}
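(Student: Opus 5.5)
The plan is to couple $\hat z_0$ and the true sample $z_0$ through a single trajectory of the flow ODE \eqref{fm_ode}, and then use the residual that defines $\varepsilon_{0,s}$ as the time derivative of a suitable interpolating quantity. Fix $z_1\sim\rho_1$ and let $\{z_s,\ s\in[0,1]\}$ solve \eqref{fm_ode} backward from $s=1$, with drift \eqref{mixture_velocity}. By the standard flow matching marginal property, $z_s\sim\rho_s$ for every $s$, and in particular $z_0\sim\rho_0$. This is exactly the law of $z_s$ under which $\varepsilon_{0,s}$ is computed. I will assume enough regularity (Lipschitz $v$, $C^1$ candidate $\hat u$) for the ODE to be well posed and for the total derivative \eqref{total_derivative} to hold along trajectories.

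\textbf{Step 1 (interpolating map).} Define
\begin{equation*}
\Phi(s) := z_s - s\,\hat u(z_s,0,s), \qquad s\in[0,1].
\end{equation*}
This is the candidate one-step prediction of the time-$0$ state made from the point $z_s$. At $s=0$ it gives $\Phi(0)=z_0\sim\rho_0$. At $s=1$ it gives $\Phi(1)=z_1-\hat u(z_1,0,1)=\hat z_0\sim\hat\rho_0$. Hence $(\Phi(1),\Phi(0))$ is a coupling of $(\hat\rho_0,\rho_0)$, and
\begin{equation*}
\mathcal W_2^2(\hat\rho_0,\rho_0)\le \E\|\Phi(1)-\Phi(0)\|^2 .
\end{equation*}

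\textbf{Step 2 (derivative equals the residual).} Differentiate along the trajectory, using $\dot z_s=v_s(z_s)$ and the chain rule \eqref{total_derivative}:
\begin{equation*}
\frac{\d}{\d s}\Phi(s) = v_s(z_s) - \hat u(z_s,0,s) - s\,\frac{\d}{\d s}\hat u(z_s,0,s) = -R_s .
\end{equation*}
Here $R_s := \hat u(z_s,0,s)-\big(v_s(z_s)-s\,\tfrac{\d}{\d s}\hat u(z_s,0,s)\big)$. The stop-grad only affects gradients with respect to parameters, not values, so by definition $\E\|R_s\|^2=\varepsilon_{0,s}$.

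\textbf{Step 3 (integrate).} Write $\Phi(1)-\Phi(0)=-\int_0^1 R_s\,\d s$. By Jensen (or Cauchy--Schwarz) in $s$ and Fubini,
\begin{equation*}
\E\|\Phi(1)-\Phi(0)\|^2\le \int_0^1\E\|R_s\|^2\,\d s=\E_{s}[\varepsilon_{0,s}]\le\varepsilon ,
\end{equation*}
with $s\sim\mathrm{unif}(0,1)$. This gives $\mathcal W_2^2(\hat\rho_0,\rho_0)\le\varepsilon\le e\,\varepsilon$, which is the claimed bound.

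The main obstacle is justifying Steps 1--2 rigorously: well-posedness of the ODE, the marginal property $z_s\sim\rho_s$, and differentiability of $s\mapsto\hat u(z_s,0,s)$ along trajectories, so that the fundamental theorem of calculus applies pathwise. A further point is that $\E_s$ in the hypothesis must mean $s$ uniform on $[0,1]$. If the sampling law of $s$ has a density bounded below by some $c>0$ instead, the same argument gives $\varepsilon/c$.

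If the paper's route instead compares $z_s$ against the $\hat u$-generated point and controls the discrepancy by a Gr\"onwall-type inequality, for instance
\begin{equation*}
\frac{\d}{\d s}\E\|\cdot\|^2\le \E\|\cdot\|^2+\varepsilon_{0,s}
\end{equation*}
integrated over $[0,1]$, that is where the factor $e$ would arise. The direct computation above dominates that bound, so I would present it first.
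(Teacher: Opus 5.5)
Your proof is correct, and it takes a genuinely different and sharper route than the paper.

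\textbf{What the two proofs share.} The paper couples $\hat z_0$ and $z_0$ through the same $z_1$, exactly as you do. It tracks the squared discrepancy $E_{0,s}=\E\|s(u(z_s,0,s)-\hat u(z_s,0,s))\|^2$. Since $z_s-s\,u(z_s,0,s)=z_0$ along true trajectories, this equals $\E\|\Phi(s)-\Phi(0)\|^2$ in your notation. The paper uses the exact mean-flow identity \eqref{mean_flow} for the true $u$ to show that the $s$-derivative of $-s(u-\hat u)$ is precisely your residual $R_s$.

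\textbf{Where they differ.} The paper then applies Young's inequality to the differentiated square, $2\langle -(\Phi(s)-\Phi(0)),R_s\rangle\le\|\Phi(s)-\Phi(0)\|^2+\|R_s\|^2$. This gives the Gr\"onwall inequality $\partial_s E_{0,s}\le E_{0,s}+\varepsilon_{0,s}$, hence $E_{0,1}\le\int_0^1 e^{1-s}\varepsilon_{0,s}\,\d s\le e\varepsilon$. This is exactly the scenario you anticipated at the end. You instead integrate the vector identity $\Phi'(s)=-R_s$ directly and apply Cauchy--Schwarz in time. That avoids the Young loss and gives $\mathcal W_2^2(\hat\rho_0,\rho_0)\le\int_0^1\varepsilon_{0,s}\,\d s$.

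\textbf{What your route buys.} Since $e^{1-s}\ge 1$ on $[0,1]$, your bound improves on the paper's and removes the factor $e$. Minkowski's integral inequality would even give $\bigl(\int_0^1\sqrt{\varepsilon_{0,s}}\,\d s\bigr)^2$. Your argument is also lighter technically. It is pathwise, followed by Tonelli on nonnegative quantities, so you never differentiate $E_{0,s}$ under the expectation. You also never need the true mean velocity $u$ or identity \eqref{mean_flow}; only the ODE trajectory and its marginals $\rho_s$ enter.

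\textbf{What the paper's route buys.} In this setting, nothing extra. A Gr\"onwall framing would be needed only if the derivative of the discrepancy contained a feedback term proportional to the discrepancy itself, for instance a Lipschitz term from coupling along learned rather than true trajectories. No such term appears here.

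\textbf{Regularity.} The caveats you list are the same ones the paper uses implicitly: well-posedness of the flow, $\hat u$ regular up to $s=0$ so that $\Phi(0)=z_0$, and $s$ uniform on $[0,1]$. The paper relies on them, for example, in $E_{0,0}=0$ and in $\int_0^1\varepsilon_{0,s}\,\d s=\E_s[\varepsilon_{0,s}]$.
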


Notice that the assumption that $\E_{s}[\varepsilon_{0,s}]$ is small is reasonable when training with the loss $\mathcal{L}_{\mathrm{MF}}=\E_{r,s}[\varepsilon_{r,s}]$, since we are minimizing this loss for all values of $r<s$. It is also a standard assumption the literature \cite{Boffi2025flow}. 
Theorem~\ref{thm:mean:flow:convergence} provides the theoretical convergence guarantees in the 2-Wasserstein distance for the Mean Flow model introduced in \cite{geng2025mean}, which bridges a gap between theory and practice. 

We now study the convergence of the conditional Mean Flow model \eqref{cond_meanflow} to the SDE transition kernel $p(x_0|x_t)$ associated with \eqref{sampling_SDE}. 
Let $\hat{p}_{0|t}(x_0|x_t)$ be the single-step generated distribution of 
$\hat{z}_{0}={z}_{1}-\hat{u}({z}_{1},0,1, x_t, t)$, 
where ${z}_{1}\sim \rho_1$ and $\hat{u}$ is a candidate conditional Mean Flow model.
Furthermore, given $\hat{u}=\hat{u}(z_{s},r,s,x_{t},t)$, define the $(t, x_t)$-conditional mean flow error as 
\begin{equation} \label{gamma_t_xt}
    \gamma(r, s, t, x_t) :=   \E_{z_s} \left[ \left\| \hat{u} - \left( v_{s}(z_s| x_t, t) - (s-r)\frac{\d}{\d s} \hat{u} \right) \right\|^2 \right].
\end{equation}
Given \eqref{gamma_t_xt}, the conditional Mean Flow loss \eqref{condMF} equals $\mathcal{L}_{\mathrm{dMF}} = \E_{r,s,t, x_t} [\gamma(r,s,t, x_t)]$.
The following is a corollary of Theorem~\ref{thm:mean:flow:convergence}. 

\begin{corollary} \label{cor:cond_meanflow_convergence}
For a given $t, x_t$, let $\hat{p}_{0|t}(x_0|x_t)$ denote the single-step generated distribution of $\hat{z}_{0}=z_{1}-\hat{u}(z_{1},0,1, x_t, t)$ with $z_{1}\sim \rho_{1}$, where $\hat{u}$ is a candidate conditional mean velocity field. 
Assume that $\E_{s, t, x_t} [ \gamma(0, s, t, x_t)] \leq \varepsilon$. 
Then, 
\begin{equation}
\E_{t, x_t} \left[ \mathcal{W}_{2}^{2}(\hat{p}(x_0|x_t), p(x_0|x_t)) \right] \leq e \varepsilon.
\end{equation}
\end{corollary}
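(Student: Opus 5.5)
The plan is to apply Theorem~\ref{thm:mean:flow:convergence} pointwise in the conditioning variables $(t,x_t)$ and then average. Fix $t\in[0,1]$ and $x_t$. Conditionally on $(t,x_t)$, the conditional Mean Flow \eqref{cond_meanflow} is exactly an unconditional Mean Flow problem. Its data distribution is $\rho_0(\cdot)=p(x_0|x_t)$ and its prior is $\rho_1=\N(0,I_d)$. The interpolant is $z_s=(1-s)x_0+sz_1$, with $x_0\sim p(x_0|x_t)$ independent of $z_1$; this independence holds because in Algorithm~\ref{training_algo} $z_1$ is drawn independently of $(x_0,x_t)$. The marginal velocity is $v_s(z_s|x_t,t)=\E[z_1-x_0|z_s,x_t,t]$, which has the form \eqref{mixture_velocity} for this pair of endpoint distributions. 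The candidate field $\hat u(\cdot,\cdot,\cdot,x_t,t)$, viewed as a function of $(z_s,r,s)$ alone, is a candidate mean velocity field. The quantity $\gamma(r,s,t,x_t)$ in \eqref{gamma_t_xt} is then precisely the $(r,s)$-conditional Mean Flow error $\varepsilon_{r,s}$ for this conditioned problem. Here $\frac{\d}{\d s}\hat u$ is the total derivative along the conditional flow, with $(x_t,t)$ frozen, so \eqref{total_derivative} applies unchanged.

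Define $\varepsilon(t,x_t):=\E_s[\gamma(0,s,t,x_t)]$. Applying Theorem~\ref{thm:mean:flow:convergence} with $\varepsilon$ replaced by $\varepsilon(t,x_t)$ gives the pointwise bound
\[
\mathcal{W}_2^2\big(\hat p(x_0|x_t),p(x_0|x_t)\big)\le e\,\varepsilon(t,x_t)
\]
for every $(t,x_t)$ with $\varepsilon(t,x_t)<\infty$. If $\varepsilon(t,x_t)=\infty$ the inequality is trivial. Taking $\E_{t,x_t}$ of both sides and using the tower property, $\E_{t,x_t}[\varepsilon(t,x_t)]=\E_{s,t,x_t}[\gamma(0,s,t,x_t)]\le\varepsilon$, which yields the claim.

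The main thing to check is that the hypotheses of Theorem~\ref{thm:mean:flow:convergence} hold for almost every $(t,x_t)$, not only on average. We need $p(x_0|x_t)\in\mathcal P_2(\R^d)$. This follows for a.e.\ $x_t$ from $\E\|x_0\|^2<\infty$ via $\E_{x_t}\E[\|x_0\|^2|x_t]=\E\|x_0\|^2$. Whatever regularity of the flow the theorem's proof uses should likewise transfer to each conditional problem. A second point is measurability: $(t,x_t)\mapsto\mathcal W_2^2(\hat p(\cdot|x_t),p(\cdot|x_t))$ must be measurable for the outer expectation to make sense. I would handle this by noting that $\mathcal W_2$ is lower semicontinuous under weak convergence and that both conditional laws are measurable kernels. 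Alternatively, one can bound the expectation from above by the coupling cost of an explicit measurable coupling, namely the one constructed in the proof of the theorem, which makes the averaging argument rigorous.
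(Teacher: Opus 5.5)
Your proposal is correct and is essentially the paper's proof: for fixed $(t,x_t)$ you treat $\gamma(r,s,t,x_t)$ as the $(r,s)$-error $\varepsilon_{r,s}$ of an unconditional Mean Flow problem with target $p(x_0|x_t)$, apply Theorem~\ref{thm:mean:flow:convergence} to get $\mathcal{W}_2^2\big(\hat p(x_0|x_t),p(x_0|x_t)\big)\le e\,\E_s[\gamma(0,s,t,x_t)]$, and then take $\E_{t,x_t}$. The paper deals with the exceptional set by noting that $\E_s[\gamma(0,s,t,x_t)]<\infty$ for a.e.\ $(t,x_t)$, which plays the same role as your remark that the bound is trivial where it is infinite; your comments on measurability and conditional second moments are extra care the paper omits, not a different route.
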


Note that during the inference of the diffusion model using \eqref{sampling_SDE}, we use $x_{1} \sim \mathcal{N}(0,I_d)$
where $\mathcal{N}(0,I_d)$ is a proxy of the density $x_1 \sim {p}_{1}$ given from \eqref{noise_sde}.
This introduces an additional source of error in the generated distribution for $x_0$ in our conditional Mean Flow model. 
In the following proposition, we account for this error.


\begin{proposition}\label{prop:diffusion}
    Let $\hat{p}(x_0|x_t)$ be a candidate conditional Mean Flow model that parametrizes the transition dynamics of the reverse SDE kernel \eqref{sampling_SDE}, and let $\tilde{p}_0 = \int_{\mathbb{R}^{d}} \hat{p}(x_0|x_1) \N(x_1; 0, I_d) \d x_1$ and $\hat{p}_0 = \int_{\mathbb{R}^{d}} \hat{p}(x_0|x_1) p(x_1) \d x_1$ where $p_1(x_1)$ is the density of $x_1$ associated with \eqref{sampling_SDE}. 
    Assuming that $ m_{2} = \E[\|x_0\|^2] < \infty $, we obtain
    \begin{equation}
        \mathcal{W}_2^2 (\hat p_0, \tilde p_0) \leq L^2 \left( \alpha_{1}^{2}m_{2}+(1-\sigma_{1})^{2}d \right), 
    \end{equation}
    where $\alpha_1, \sigma_1$ are defined through \eqref{noise_kernel} and $d$ denotes the problem dimension, provided that the map $\hat{u}(z_1, 0, 1, x_1, 1)$ is $L$-Lipschitz in $x_1$ in expectation over $z_1$, that is, for any $x_1,x'_1$,
\begin{equation}\label{lip_in_x1:main}
    \E_{z_1} [ \| \hat{u}(z_1, 0, 1, x_1, 1) - \hat{u}(z_1, 0, 1, x'_1, 1)\|^2] \leq L^2 \left\| x_1 - x'_1 \right\|^2. 
\end{equation} 
\end{proposition}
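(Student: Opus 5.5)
The plan is to construct an explicit coupling between $\hat p_0$ and $\tilde p_0$ that shares the Mean Flow noise $z_1$ and couples the two conditioning variables $x_1 \sim p_1$ and $x_1' \sim \N(0,I_d)$. Recall from \eqref{noise_kernel} that $x_1 = \alpha_1 x_0 + \sigma_1 \epsilon$ with $x_0 \sim p_0$ and $\epsilon \sim \N(0,I_d)$ independent. We set $x_1' := \epsilon$, which is exactly $\N(0,I_d)$-distributed, and take $z_1 \sim \rho_1$ independent of $(x_0,\epsilon)$. With these, define
\begin{equation*}
\hat x_0 = z_1 - \hat u(z_1,0,1,x_1,1), \qquad \tilde x_0 = z_1 - \hat u(z_1,0,1,x_1',1).
\end{equation*}
By construction $\hat x_0 \sim \hat p_0$ and $\tilde x_0 \sim \tilde p_0$: conditionally on $x_1$ (respectively $x_1'$), $\hat x_0$ (respectively $\tilde x_0$) is distributed as $\hat p(\cdot|x_1)$, and the conditioning variables have marginals $p_1$ and $\N(0,I_d)$. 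Hence the joint law of $(\hat x_0,\tilde x_0)$ is an admissible coupling in the definition of $\mathcal W_2$.

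Next we bound the cost of this coupling. The $z_1$ terms cancel, leaving
\begin{equation*}
\mathcal W_2^2(\hat p_0,\tilde p_0) \le \E\left[\|\hat u(z_1,0,1,x_1,1)-\hat u(z_1,0,1,x_1',1)\|^2\right].
\end{equation*}
Conditioning on $(x_1,x_1')$ and using independence of $z_1$, the inner expectation over $z_1$ is bounded by \eqref{lip_in_x1:main}, which gives the upper bound $L^2\,\E\|x_1-x_1'\|^2$.

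Finally we compute $\E\|x_1 - x_1'\|^2 = \E\|\alpha_1 x_0 - (1-\sigma_1)\epsilon\|^2$. Because $x_0$ and $\epsilon$ are independent and $\E\epsilon = 0$, the cross term vanishes. What remains is $\alpha_1^2 m_2 + (1-\sigma_1)^2 d$, which yields the claimed bound.

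The only delicate step is justifying that this construction is a valid coupling, namely that $\epsilon$ is independent of $x_0$ and exactly standard Gaussian, and that $z_1$ is independent of both. The Lipschitz-in-expectation assumption \eqref{lip_in_x1:main} is tailored precisely to this shared-$z_1$ coupling, so the rest is routine. A measurability check on $\hat u$ is implicit in the argument.
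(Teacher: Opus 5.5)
Your proposal is correct and follows essentially the same argument as the paper: the coupling $x_1=\alpha_1x_0+\sigma_1\epsilon$, $x_1'=\epsilon$ with a shared independent $z_1$, then the Lipschitz-in-expectation bound applied conditionally on $(x_1,x_1')$, and finally independence of $x_0$ and $\epsilon$ to make the cross term vanish. Your sign in $\alpha_1x_0-(1-\sigma_1)\epsilon$ is the correct one; the paper writes a $+$ there, which does not affect the result because the cross term is zero either way.
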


In Proposition~\ref{prop:diffusion}, a sufficient condition for \eqref{lip_in_x1:main} is that $\hat u$ be uniformly $L$-Lipschitz in $x_1$ for all $z_1$; however assuming  \eqref{lip_in_x1} directly is a weaker condition. The Lipschitz condition is common in the literature; for example, 
it is often assumed that the candidate velocity field $\hat{v}_{t}$ is Lipschitz in space uniformly in $t$ in flow models \citep{albergo2023building,boffi2025build}. Moreover, the finiteness of the second moment assumption on the data distribution, i.e. $m_{2}=\mathbb{E}\Vert x_{0}\Vert^{2}<\infty$, is also very mild
and commonly assumed in the literature of diffusion models \citep{chen2022improved,chen2022sampling}.
Moreover, we will show that 
if we assume that $\tilde{p}_{1}$ and $\hat{p}_{1}$ have bounded support, which is a common assumption in the setting when the data distribution $p_{0}$ has bounded support, then we can remove the Lipschitz assumption \eqref{lip_in_x1:main}; 
see Proposition~\ref{prop:diffusion:bounded} in Appendix~\ref{App_additional} for details.


Finally, by applying Corollary~\ref{cor:cond_meanflow_convergence} and Proposition~\ref{prop:diffusion}, we obtain the following corollary that provides an upper bound on the 2-Wasserstein distance between the distribution of the output of the conditional Mean Flow model and the true data distribution. 

\begin{corollary}\label{cor:diffusion}
Under the assumptions of Theorem~\ref{thm:mean:flow:convergence}, Corollary~\ref{cor:cond_meanflow_convergence}, 
and Proposition~\ref{prop:diffusion}, 
we have
\begin{align}\label{cor:RHS}
\mathcal{W}_{2}^{2}(\tilde{p}_{0},p_{0})  \leq 2\left(L^2 \left(\alpha_{1}^{2}m_{2}+(1-\sigma_{1})^{2}d\right) + e \varepsilon_1 \right),
\end{align}
provided that $\E_{s, x_1} [\gamma(0, s, 1, x_1)] \leq \varepsilon_1$, where $\alpha_1, \sigma_1$ are defined through \eqref{noise_kernel}. 
\end{corollary}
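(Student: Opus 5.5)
The plan is to use the triangle inequality for $\mathcal{W}_2$ with the intermediate distribution $\hat p_0$ from Proposition~\ref{prop:diffusion}, and then the elementary bound $(a+b)^2\le 2a^2+2b^2$:
\begin{equation*}
\mathcal{W}_2^2(\tilde p_0,p_0)\le \bigl(\mathcal{W}_2(\tilde p_0,\hat p_0)+\mathcal{W}_2(\hat p_0,p_0)\bigr)^2\le 2\,\mathcal{W}_2^2(\tilde p_0,\hat p_0)+2\,\mathcal{W}_2^2(\hat p_0,p_0).
\end{equation*}
The first term is handled directly by Proposition~\ref{prop:diffusion}, which gives $\mathcal{W}_2^2(\hat p_0,\tilde p_0)\le L^2(\alpha_1^2 m_2+(1-\sigma_1)^2 d)$.

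For the second term, I will write both measures as mixtures over the same initial law $p_1$. By definition, $\hat p_0=\int \hat p(x_0|x_1)\,p_1(x_1)\,\d x_1$. Because the time-reversed SDE \eqref{sampling_SDE} has the same marginals as \eqref{noise_sde}, we also have $p_0=\int p(x_0|x_1)\,p_1(x_1)\,\d x_1$. I then build a coupling of $\hat p_0$ and $p_0$ in two stages. First draw $x_1\sim p_1$. Then, conditionally on $x_1$, draw $(\hat x_0,x_0)$ from an optimal coupling of $\hat p(\cdot|x_1)$ and $p(\cdot|x_1)$. This gives a valid joint law with the correct marginals. The optimal couplings can be chosen measurably in $x_1$ by a standard measurable selection argument, e.g.\ \citet[Cor.~5.22]{villani2008optimal}. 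It follows that
\begin{equation*}
\mathcal{W}_2^2(\hat p_0,p_0)\le \E_{x_1\sim p_1}\bigl[\mathcal{W}_2^2(\hat p(x_0|x_1),p(x_0|x_1))\bigr].
\end{equation*}

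Next I bound the right-hand side by $e\varepsilon_1$. I will apply Corollary~\ref{cor:cond_meanflow_convergence}, or equivalently Theorem~\ref{thm:mean:flow:convergence} applied pointwise in $x_1$ and then averaged, with the time fixed at $t=1$ rather than averaged over $t$. For each fixed $x_1$, the conditional Mean Flow is an unconditional Mean Flow whose target is $p(x_0|x_1)$, so Theorem~\ref{thm:mean:flow:convergence} gives $\mathcal{W}_2^2(\hat p(\cdot|x_1),p(\cdot|x_1))\le e\,\E_s[\gamma(0,s,1,x_1)]$. Taking the expectation over $x_1\sim p_1$ and using the hypothesis $\E_{s,x_1}[\gamma(0,s,1,x_1)]\le\varepsilon_1$ yields the bound $e\varepsilon_1$. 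Combining the two terms gives exactly \eqref{cor:RHS}.

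I expect the main subtlety to be this specialization step. Corollary~\ref{cor:cond_meanflow_convergence} is stated with an average over $t$, whereas here we need $t=1$ fixed and $x_1$ averaged under the true marginal $p_1$. I would therefore argue through the pointwise-in-$x_1$ application of Theorem~\ref{thm:mean:flow:convergence}, not through the averaged corollary. The other delicate point is checking that the expectation over $x_1$ in the hypothesis is taken with respect to $p_1$ and not $\N(0,I_d)$. This is why the intermediate measure $\hat p_0$, rather than $\tilde p_0$, is paired with $p_0$.
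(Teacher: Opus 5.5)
Your proposal is correct and follows the paper's proof essentially step for step. The paper likewise uses the triangle inequality through $\hat p_0$ together with $(a+b)^2\le 2a^2+2b^2$, applies Proposition~\ref{prop:diffusion} to the first term, and bounds the second term by convexity of $\mathcal{W}_2^2$ over the common mixing law $p_1$ followed by the pointwise-in-$x_1$ bound from Theorem~\ref{thm:mean:flow:convergence} at $t=1$. Your measurable-selection justification of the mixture coupling, and your direct application of Theorem~\ref{thm:mean:flow:convergence} at $t=1$ (the paper instead invokes \eqref{ineq:conditional}, which it states for almost every $t$), make explicit two steps the paper passes over without comment.
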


Note that in Corollary~\ref{cor:diffusion}, the term $\alpha_{1}^{2}m_{2}+(1-\sigma_{1})^{2}d$ and hence the right hand side in \eqref{cor:RHS} can be made arbitrarily small by taking $\alpha_{1}$ to be small (so that $\sigma_{1}=\sqrt{1-\alpha_{1}^{2}}$ is close to $1$); see Corollary~\ref{cor:small} in Appendix~\ref{App_additional} for details.

\section{Related Literature}

\paragraph{Fast deterministic inference and flow-map distillation.}
A large body of work accelerates inference of flow-based models by replacing the expensive numerical integration of the learned flow with learned finite-time maps. 
An early work in this direction is \cite{luhman2021knowledge}, succeeded by Progressive distillation \citep{salimans_progressive_2022}, consistency models \citep{song2023consistency}, consistency trajectory models \citep{kim2023consistency}, flow maps \citep{boffi2025build} and Mean Flows \citep{geng2025mean}, among others. 
They all aim to reduce the number of function evaluations required at inference time, and can be broadly viewed as learning to integrate an underlying ODE of the flow model. 
However, they are primarily designed for deterministic generation, usually through the probability-flow ODE or a deterministic flow model. In contrast, our goal is not only to compress the inference trajectory, but also to preserve the stochastic transition structure of the reverse SDE. STMD therefore learns a conditional stochastic transition map, rather than a deterministic noise-to-data map.

\paragraph{Probabilistic denoising and distributional distillation.}
Several works have recognized that coarse reverse-time stochastic sampling requires more than the conditional mean predicted by standard denoising diffusion models. 
Denoising Diffusion GANs \citep{xiao_tackling_2022} and Adversarial Schr\"odinger Bridge Matching \citep{gushchin2024adversarial} model large transitions with a multimodal conditional GAN, enabling fewer denoising steps by replacing the Gaussian small-step discretization of the inference SDE with a learned stochastic transition. Distributional Diffusion Models with Scoring Rules \citep{de_bortoli_distributional_2025} similarly learn a stochastic denoiser for the full posterior distribution $p(x_0|x_t)$, using proper scoring rules such as energy distance or maximum mean discrepancy. NCVSD \citep{peng_noise_2025} also trains a noise-conditional stochastic denoiser, but does so by distilling a pretrained diffusion teacher through a variational score-distillation objective. 
These works are closest to STMD in target objective, as they also aim to model the distribution of the denoising posterior.
The main distinction is the training principle: DDGAN relies on adversarial training, Distributional Diffusion Models rely on scoring-rule losses, and NCVSD relies on teacher-based score distillation. STMD instead uses a teacher-free, regression-based conditional Mean Flow objective to learn the finite-time transition kernel associated with the reverse SDE.

\paragraph{Distribution matching and bridge distillation.}
A related family of methods distill diffusion or bridge models by matching generated distributions rather than explicitly matching transition kernels. Distribution Matching Distillation \cite{yin2024one} trains a one-step generator by minimizing a KL-based objective between the generated distribution and the distribution induced by a pretrained diffusion model. Inverse Bridge Matching Distillation \citep{gushchin2025inverse} extends this idea to bridge-matching models by learning a stochastic generator whose induced bridge model matches a pretrained bridge vector field. These methods are effective for fast generation, but they primarily match endpoint marginals, rather than learning the conditional transition kernel $p(x_{t'} | x_t)$ of the sampling SDE. 
Furthermore, unlike our approach, they require a pretrained diffusion or bridge matching model during training. 

\paragraph{Transition matching and discrete stochastic generators.}
Focusing on discrete-time inference, recent transition-matching methods also emphasize learning expressive stochastic transitions between intermediate generative states, and have been proposed as flexible alternatives to diffusion and flow matching for efficient and probabilistic generation \citep{shaul2025transition}.
Transition Matching Distillation \citep{nie2026transition} further applies this idea to few-step generation by distilling long diffusion trajectories into a compact transition process. 
These methods share the goal of accelerating inference through learned transitions with STMD, and also share the concept of using a flow model to model coarse transitions.
However, their acceleration principle lies in a patch-based modeling of the transition flow, while the stochasticity of the generator is associated with a bridge model, rather than a diffusion process.
STMD instead uses full-state flow model to parametrize transition, and its acceleration is based on distillation.

\section{Experiments} \label{experiments_section}

We conduct a series of experiments to validate our approach for various image generation and conditional sampling tasks. 
All our models are trained on a single RTX 5090 GPU. 
To allow for fair comparisons when comparing with baselines, we use the same neural network backbone for all approaches and train from scratch for the same number of iterations. 
Although the small number of iterations used in all experiments prevents our networks from achieving state-of-the-art performance, our experiments allow for a comparative study against baselines.

\paragraph{MNIST Example.} 
We first test our method on the MNIST dataset \citep{lecun2010mnist}.
We parametrize the Mean Flow model with a U-Net \citep{ronneberger2015u} using the diffusers toolbox \citep{von-platen-etal-2022-diffusers} and PyTorch \citep{paszke2019pytorch}. 
We provide the exact hyperparameters of the network in Appendix~\ref{App_exp_details}. 
We concatenate the flow variable $z_s$ and the diffusion conditioning $x_t$ along the channel axis and use sinusoidal conditioning and a multilayer ResNet for mixing the conditioning variables $r, s, t$. 
Following \cite{geng2025mean}, instead of feeding all three time steps directly into the conditional Mean Flow network, we found that the model performs best when using $s-r, s, t$ conditioning instead of $r, s, t$ (see inference Algorithm~\ref{inference_algo}). 
We illustrate generated samples using STMD with different numbers of inference steps $n_{\mathrm{mf}}, n_{\mathrm{inf}}$ in Figure~\ref{fig:mnist_samples}. 

To benchmark our model, since the Fr\'{e}chet inception distance (FID) scores~\citep{heusel2017gans} can be unreliable in the MNIST dataset \citep{song_generative_2020}, as common in the literature, we report Fr\'{e}chet distance (FD) calculated in the latent space of a MNIST classifier in Figure~\ref{fig:mnist_fd}.
We defer the details of the classifier used to Appendix~\ref{App_exp_details}. 
Using our FD metric, we compare our model against the original Mean Flow algorithm \citep{geng2025mean} and a Denoising Diffusion Probabilistic Model (DDPM) baseline \citep{ho2020denoising}, while ablating the number of inference steps for each method. We present the results in Figure~\ref{fig:mnist_fd}, where we denote the total number of function evaluations (NFE) during the inference loop of each method on the horizontal axis and the FD score on the vertical axis. We use the same network, trained for the same number of iterations, for all three methods. For the FD calculation, we use 10K generated samples and 10K samples from the test dataset.
\begin{figure}[!ht]
    \centering
    \begin{subfigure}{0.7\linewidth}
        \centering
        \includegraphics[width=1\linewidth]{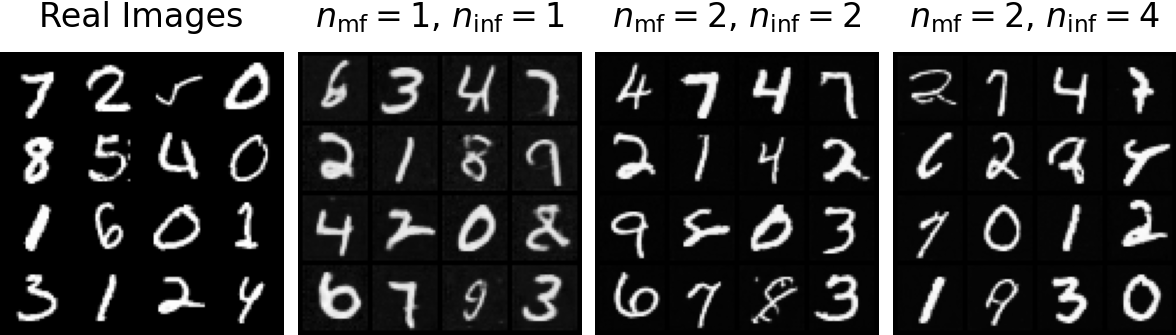}
        \caption{}
        \label{fig:mnist_samples}
    \end{subfigure}
    \hfill
    \begin{subfigure}{0.28 \linewidth}
        \centering
        \includegraphics[width=1\linewidth]{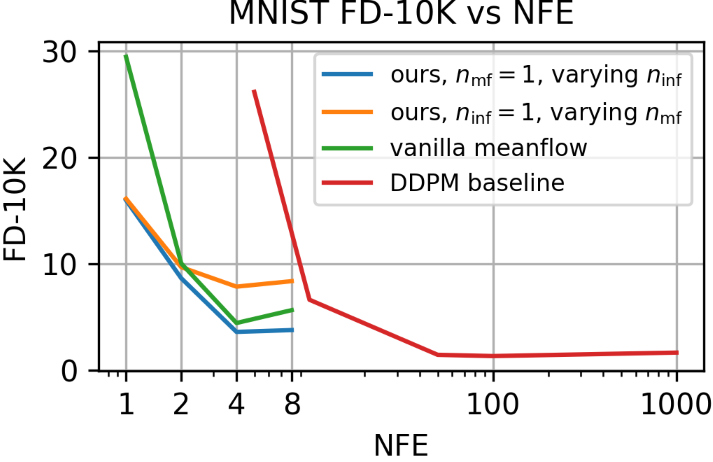}
        \caption{}
        \label{fig:mnist_fd}
    \end{subfigure}
    \caption{(a): Unconditional MNIST samples using various $n_{\mathrm{inf}}, n_{\mathrm{mf}}$ (b): MNIST FD vs NFE.}
    \label{fig:mnist_combined}
\end{figure}
\begin{figure}[!ht]
    \centering
    \begin{subfigure}{0.68\linewidth}
        \centering
        \includegraphics[width=1\linewidth]{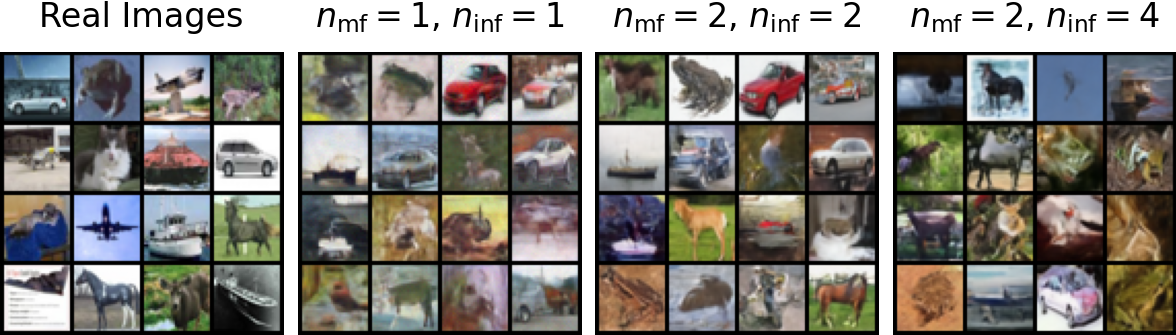}
        \caption{}
        \label{fig:cifar10_samples}
    \end{subfigure}
    \hfill
    \begin{subfigure}{0.3 \linewidth}
        \centering
        \includegraphics[width=1\linewidth]{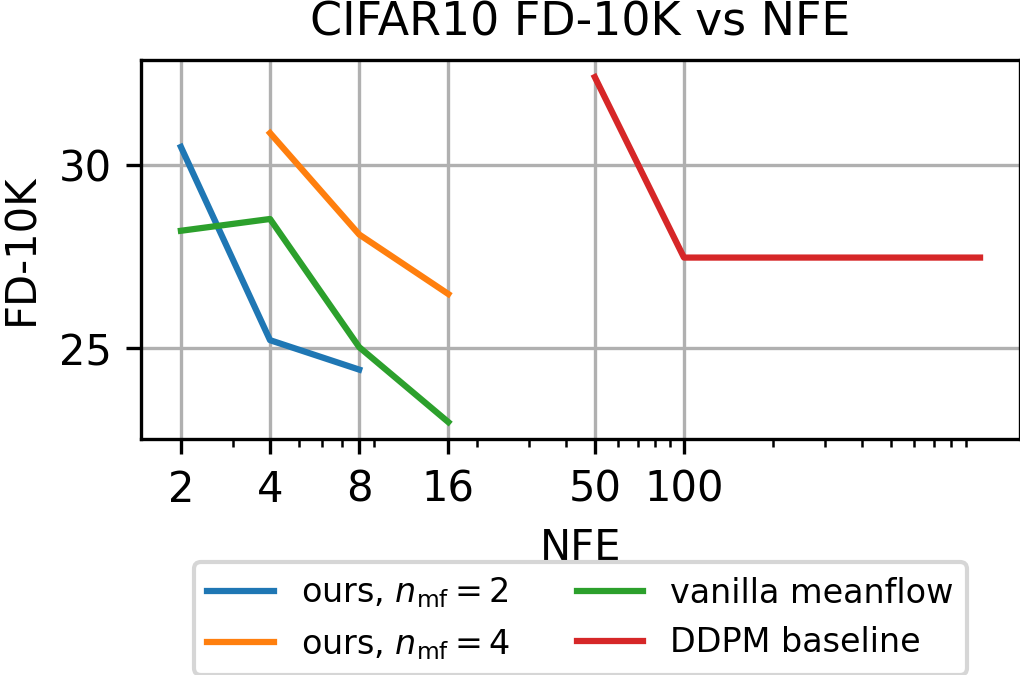}
        \caption{}
        \label{fig:cifar10_fid}
    \end{subfigure}
    \caption{(a): \!\! Unconditional CIFAR10 samples  using various $n_{\mathrm{inf}}, n_{\mathrm{mf}}$ (b): CIFAR10 FID vs NFE.}
    \label{fig:CIFAR10_combined}
\end{figure}
\paragraph{CIFAR10.} 
Next, we test our model in unconditional generation on the CIFAR10 dataset \citep{krizhevsky2009learning}. 
We provide data generated with our STMD approach for a varying number of NFEs in Figure \ref{fig:cifar10_samples}.
Similarly to our MNIST experiment, we used 10K samples from the test set of the dataset to compute FID scores and facilitate computations using the \texttt{torchmetrics} toolkit \citep{detlefsen2022torchmetrics}.
As in the previous example, we compare our model with a DDPM and an original Mean Flow baseline, using the same network and the same number of training iterations for all experiments. We defer the descriptions of these parameters to Appendix~\ref{App_exp_details}. 
We observe that for a fixed number of training iterations, our model performs comparatively with the mean flow model, while retaining full stochasticity of the diffusion-based sampler.
%
\vspace{-2mm}
\paragraph{CelebA.} 
Finally, we conduct experiments using the CelebA \citep{liu2015deep} dataset using a latent DiT architecture \citep{peebles2023scalable}. 
Specifically, we use a DiT/M2 network from \cite{geng2025mean} using an additional temporal conditioning signal for the diffusion time $t$. 
We showcase unconditional samples generated with our model in Figure~\ref{fig:celebA_demo} and an inpainting example in Figure~\ref{fig:inpaint_demo}.
We defer the details of the inpainting algorithm to the Appendix \ref{App_inverse_problems}.
For the unconditional image generation, we used $n_{\mathrm{inf}}=4$ and $n_{\mathrm{mf}}=2$ and calculated an FID score of $8.28$, using 10K samples from the test dataset.
For inpainting, we use $n_{\mathrm{mf}}=2$, and a total of $50$ inference steps.
%
\begin{figure}[!ht]
    \centering
    \includegraphics[width=0.95\linewidth]{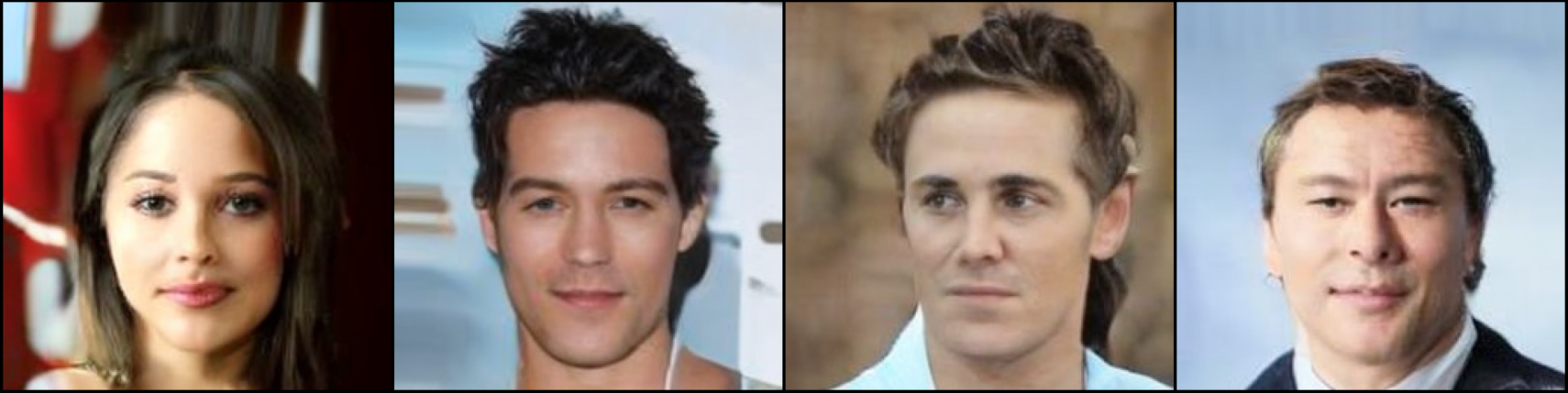}
    \caption{Unconditional generation on the CelebA dataset.}
    \label{fig:celebA_demo}
\end{figure}
\begin{figure}[!ht]
    \centering
    \includegraphics[width=0.95\linewidth, ]{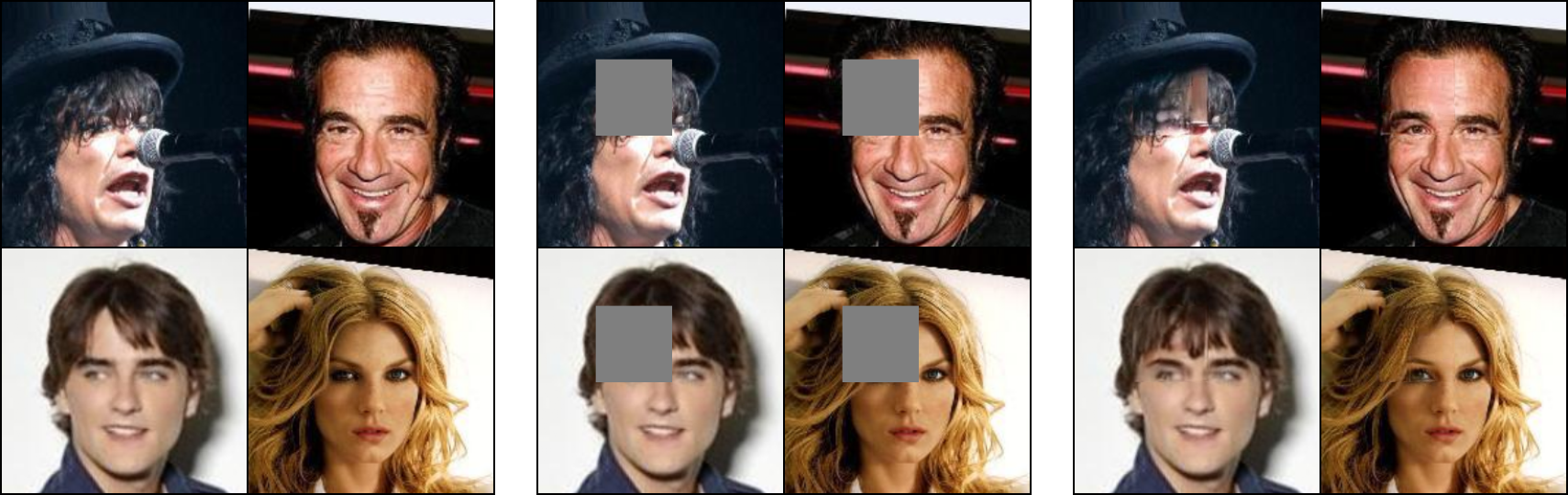}
    \caption{Image inpainting on the CelebA dataset. From left to right: original image, masked image, inpainted image.}
    \label{fig:inpaint_demo}
\end{figure}

\vspace{-3mm}
\section{Conclusions and Limitations}
To conclude, we introduce STMD, a framework for distilling the transition maps of diffusion models into one- or few-step generators. 
Unlike state-of-the-art methods for the same task, our model does not require a pretrained teacher model or bilevel optimization algorithms during training. 
We establish a strong theoretical foundation for our approach and derive a convergence result for the Mean Flow model in the Wasserstein distance. 
We then generalize this result to our STMD model and show that it converges in expectation to the transition kernel of the inference process of the reference sampling diffusion and that the generated sample distribution converges to that of the data. 
Finally, we demonstrate our method on a comprehensive set of image generation and inpainting experiments over the MNIST, CIFAR10, and CelebA datasets and show that our method performs in line with baselines, while allowing for probabilistic sampling and requiring only a small number of inference steps.

\bibliographystyle{icml2026}
\bibliography{refs}

@inproceedings{arjovsky2017wasserstein,
  title={Wasserstein generative adversarial networks},
  author={Arjovsky, Martin and Chintala, Soumith and Bottou, L{\'e}on},
  booktitle={International Conference on Machine Learning},
  pages={214--223},
  volume={70},
  year={2017},
  address={Sydney, {Australia}},
  organization={PMLR}
}

@inproceedings{kingma2014VAE,
  title={Auto-encoding variational {B}ayes},
  author={Kingma, Diederik P and Welling, Max},
  booktitle={International Conference on Learning Representations},
  year={2014}
}

@article{kingma2019introduction,
  title={An introduction to variational autoencoders},
  author={Kingma, Diederik P and Welling, Max},
  journal={Foundations and Trends{\textregistered} in Machine Learning},
  volume={12},
  number={4},
  pages={307--392},
  year={2019},
  publisher={Now Publishers, Inc.}
}

@inproceedings{song2020score,
  title={Score-Based Generative Modeling through Stochastic Differential Equations},
  author={Yang Song and Jascha Sohl-Dickstein and Diederik P Kingma and Abhishek Kumar and Stefano Ermon and Ben Poole},
  booktitle={International Conference on Learning Representations},
  year={2021},
  address={held virtually}
}

@inproceedings{lipman2022flow,
title={Flow Matching for Generative Modeling},
author={Yaron Lipman and Ricky T. Q. Chen and Heli Ben-Hamu and Maximilian Nickel and Matthew Le},
booktitle={International Conference on Learning Representations},
address={Kigali Rwanda},
month={May},
year={2023}
}

@inproceedings{liu2022flow,
  title={Flow Straight and Fast: Learning to Generate and Transfer Data with Rectified Flow},
  author={Liu, Xingchao and Gong, Chengyue and Liu, Qiang},
  booktitle={International Conference on Learning Representations},
  address={Kigali Rwanda},
  month={May},
  year={2023}
}

@inproceedings{paszke2019pytorch,
  title={Py{T}orch: An imperative style, high-performance deep learning library},
  author={Paszke, Adam and Gross, Sam and Massa, Francisco and Lerer, Adam and Bradbury, James and Chanan, Gregory and Killeen, Trevor and Lin, Zeming and Gimelshein, Natalia and Antiga, Luca and Desmaison, Alban and K\"{o}pf, Andreas and Yang, Edward and DeVito, Zach and Raison, Martin and Tejani, Alykhan and Chilamkurthy, Sasank and Steiner, Benoit and Fang, Lu and Bai, Junjie and Chintala, Soumith},
  booktitle={Advances in Neural Information Processing Systems},
  volume={32},
  year={2019}, 
  pages={8026 - 8037},
  publisher = {Curran Associates, Inc.},
  address={Vancouver, Canada}
}

@inproceedings{ho2020denoising,
  title={Denoising diffusion probabilistic models},
  author={Ho, Jonathan and Jain, Ajay and Abbeel, Pieter},
  booktitle={Advances in Neural Information Processing Systems},
  volume={33},
  pages={6840--6851},
  publisher = {Curran Associates, Inc.},
  year={2020}
}

@inproceedings{shi2023diffusion,
  title={Diffusion {Schr{\"o}dinger} bridge matching},
  author={Shi, Yuyang and De Bortoli, Valentin and Campbell, Andrew and Doucet, Arnaud},
  booktitle={Advances in Neural Information Processing Systems},
  volume={36},
  pages = {62183--62223},
  publisher = {Curran Associates, Inc.},
  year={2023}
}

@inproceedings{albergo2023building,
  title={Building Normalizing Flows with Stochastic Interpolants},
  author={Albergo, Michael and Vanden-Eijnden, Eric},
  booktitle={International Conference on Learning Representations},
  address={Kigali Rwanda},
  month={May},
  year={2023}
}

@inproceedings{heusel2017gans,
  title={G{AN}s trained by a two time-scale update rule converge to a local {N}ash equilibrium},
  author={Heusel, Martin and Ramsauer, Hubert and Unterthiner, Thomas and Nessler, Bernhard and Hochreiter, Sepp},
  booktitle={Advances in Neural Information Processing Systems},
  volume={30},
  publisher = {Curran Associates, Inc.},
  year={2017}
}

@inproceedings{shaul2025transition,
 author = {Shaul, Neta and Singer, Uriel and Gat, Itai and Lipman, Yaron},
 booktitle = {Advances in Neural Information Processing Systems},
 editor = {D. Belgrave and C. Zhang and H. Lin and R. Pascanu and P. Koniusz and M. Ghassemi and N. Chen},
 pages = {124347--124388},
 publisher = {Curran Associates, Inc.},
 title = {Transition Matching: Scalable and Flexible Generative Modeling},
 volume = {38},
 year = {2025}
}

@misc{dieleman2024distillation,
  author = {Dieleman, Sander},
  title = {The paradox of diffusion distillation},
  url = {https://sander.ai/2024/02/28/paradox.html},
  year = {2024}
}

@inproceedings{peebles2023scalable,
    address = {Paris, France},
    title = {Scalable {Diffusion} {Models} with {Transformers}},
    urldate = {2025-07-10},
    booktitle = {2023 {IEEE}/{CVF} {International} {Conference} on {Computer} {Vision} ({ICCV})},
    publisher = {IEEE},
    author = {Peebles, William and Xie, Saining},
    month = oct,
    year = {2023},
    pages = {4172--4182}
}

@inproceedings{gushchin2024adversarial,
  title={Adversarial {S}chr{\"o}dinger bridge matching},
  author={Gushchin, Nikita and Selikhanovych, Daniil and Kholkin, Sergei and Burnaev, Evgeny and Korotin, Aleksandr},
  booktitle={Advances in Neural Information Processing Systems},
  volume={37},
  pages={89612--89651},
  publisher = {Curran Associates, Inc.},
  year={2024}
}

@inproceedings{song2023consistency,
    title = {Consistency models},
    author = {Song, Yang and Dhariwal, Prafulla and Chen, Mark and Sutskever, Ilya},
    year = {2023},
    booktitle={Proceedings of the 40th International Conference on Machine Learning},
    volume={202},
    pages={32211-32252},
    organization={PMLR}
}

@inproceedings{
gushchin2025inverse,
title={Inverse Bridge Matching Distillation},
author={Nikita Gushchin and David Li and Daniil Selikhanovych and Evgeny Burnaev and Dmitry Baranchuk and Alexander Korotin},
booktitle={Forty-second International Conference on Machine Learning},
year={2025},
volume={267},
pages={21471-21496},
organization={PMLR}
}

@inproceedings{yin2024one,
  title={One-step diffusion with distribution matching distillation},
  author={Yin, Tianwei and Gharbi, Micha{\"e}l and Zhang, Richard and Shechtman, Eli and Durand, Fredo and Freeman, William T and Park, Taesung},
  booktitle={Proceedings of the IEEE/CVF Conference on Computer Vision and Pattern Recognition},
  pages={6613--6623},
  year={2024}
}

@article{luhman2021knowledge,
  title={Knowledge distillation in iterative generative models for improved sampling speed},
  author={Luhman, Eric and Luhman, Troy},
  journal={arXiv preprint arXiv:2101.02388},
  year={2021}
}

@inproceedings{
goodfellow2014GAN,
title={Generative adversarial nets},
author={Goodfellow, Ian and Pouget-Abadie, Jean and Mirza, Mehdi and Xu, Bing and Warde-Farley, David and Ozair, Sherjil and Courville, Aaron and Bengio, Yoshua},
booktitle={Advances in Neural Information Processing Systems},
year={2014},
volume={27},
publisher = {Curran Associates, Inc.}
}

@article{goodfellow2020generative,
  title={Generative adversarial networks},
  author={Goodfellow, Ian and Pouget-Abadie, Jean and Mirza, Mehdi and Xu, Bing and Warde-Farley, David and Ozair, Sherjil and Courville, Aaron and Bengio, Yoshua},
  journal={Communications of the ACM},
  volume={63},
  number={11},
  pages={139--144},
  year={2020},
  publisher={ACM New York, NY, USA}
}

@inproceedings{kim2023consistency,
  title={Consistency trajectory models: Learning probability flow {ODE} trajectory of diffusion},
  author={Kim, Dongjun and Lai, Chieh-Hsin and Liao, Wei-Hsiang and Murata, Naoki and Takida, Yuhta and Uesaka, Toshimitsu and He, Yutong and Mitsufuji, Yuki and Ermon, Stefano},
  booktitle={International Conference on Learning Representations},
  year={2024}
}

@inproceedings{boffi2025build,
 author = {Boffi, Nicholas and Albergo, Michael and Vanden-Eijnden, Eric},
 booktitle = {Advances in Neural Information Processing Systems},
 editor = {D. Belgrave and C. Zhang and H. Lin and L. Montoya and R. Pascanu and P. Koniusz and M. Ghassemi and N. Chen},
 pages = {33346--33382},
 publisher = {Curran Associates, Inc.},
 title = {How to build a consistency model: Learning flow maps via self-distillation},
 volume = {38},
 year = {2025}
}

@inproceedings{geng2025mean,
    title = {Mean flows for one-step generative modeling},
 author = {Geng, Zhengyang and Deng, Mingyang and Bai, Xingjian and Kolter, Zico and He, Kaiming},
 booktitle = {Advances in Neural Information Processing Systems},
 editor = {D. Belgrave and C. Zhang and H. Lin and L. Montoya and R. Pascanu and P. Koniusz and M. Ghassemi and N. Chen},
 pages = {75460--75482},
 publisher = {Curran Associates, Inc.},
 volume = {38},
 year = {2025}
}

@inproceedings{salimans_progressive_2022,
    title = {Progressive distillation for fast sampling of diffusion models},
    author={Salimans, Tim and Ho, Jonathan},
    booktitle={International Conference on Learning Representations},
    year = {2022}
}

@misc{holderrieth_diamond_2026,
    title = {Diamond maps: Efficient reward alignment via stochastic flow maps},
    shorttitle = {Diamond maps},
    url = {http://arxiv.org/abs/2602.05993},
    doi = {10.48550/arXiv.2602.05993},
    publisher = {arXiv},
    author = {Holderrieth, Peter and Chen, Douglas and Eyring, Luca and Shah, Ishin and Anantharaman, Giri and He, Yutong and Akata, Zeynep and Jaakkola, Tommi and Boffi, Nicholas Matthew and Simchowitz, Max},
    month = feb,
    year = {2026},
    note = {arXiv:2602.05993 [cs]},
}

@inproceedings{de_bortoli_distributional_2025,
    title = {Distributional diffusion models with scoring rules},
    organization = {PMLR},
    author = {De Bortoli, Valentin and Galashov, Alexandre and Guntupalli, J. Swaroop and Zhou, Guangyao and Murphy, Kevin and Gretton, Arthur and Doucet, Arnaud},
    year = {2025},
    volume={267},
    pages={12632-12676},
    booktitle={Proceedings of the 42nd International Conference on Machine Learning}
}

@inproceedings{peng_noise_2025,
    title = {Noise conditional variational score distillation},
    organization = {PMLR},
    author = {Peng, Xinyu and Zheng, Ziyang and Wang, Yaoming and Li, Han and Kan, Nuowen and Dai, Wenrui and Li, Chenglin and Zou, Junni and Xiong, Hongkai},
    year = {2025},
    booktitle = {Proceedings of the 42nd International Conference on Machine Learning},
    volume={267},
    pages = {48923-48947}
}

@inproceedings{chung_diffusion_2024,
    title = {Diffusion posterior sampling for general noisy inverse problems},
    author = {Chung, Hyungjin and Kim, Jeongsol and Mccann, Michael T. and Klasky, Marc L. and Ye, Jong Chul},
    booktitle={International Conference on Learning Representations},
    year = {2023}
}

@article{Efron2011,
  title={Tweedie's formula and seletion bias},
  author={Efron, Bradley},
  journal={Journal of the American Statistical Association},
  volume={106},
  nmber={496},
  pages={1602-1614},
  year={2011}
}

@article{Boffi2025flow,
  title={Flow map matching with stochastic interpolants: A mathematical framework for consistency models},
  author={Boffi, Nicholas M. and Albergo, Michael S. and Vanden-Eijnden, Eric},
  journal={Transactions on Machine Learning Research},
  volume={05},
  pages={1-28},
  year={2025}
}

@book{villani2008optimal,
  title={Optimal Transport: Old and New},
  author={Villani, C{\'e}dric},
  volume={338},
  year={2008},
  publisher={Springer}
}

@inproceedings{Frans2025,
    title = {One step diffusion via shortcut models},
    author = {Frans, Kevin and Hafner, Danijar and Levine, Sergey and Abbeel, Pieter},
    booktitle={International Conference on Learning Representations},
    year = {2025}
}

@inproceedings{Geng-easy-2025,
    title = {Consistency models made easy},
    author = {Geng, Zhengyang and Pokle, Ashwini and Luo, Weijian and Lin, Justin and Kolter, J. Zico},
    booktitle={International Conference on Learning Representations},
    year = {2025}
}

@inproceedings{Lu-2025,
    title = {Simplifying, stabilizing and scaling continuous-time consistency models},
    author = {Lu, Cheng and Song, Yang},
    booktitle={International Conference on Learning Representations},
    year = {2025}
}

@inproceedings{Song-2024,
    title = {Improved techniques for training consistency models},
    author = {Song, Yang and Prafulla Dhariwal},
    booktitle={International Conference on Learning Representations},
    year = {2024}
}

@inproceedings{ronneberger2015u,
  title={U-net: Convolutional networks for biomedical image segmentation},
  author={Ronneberger, Olaf and Fischer, Philipp and Brox, Thomas},
  booktitle={International Conference on Medical Image Computing and Computer-Assisted Intervention},
  pages={234--241},
  year={2015},
  organization={Springer}
}

@misc{von-platen-etal-2022-diffusers,
  author = {Patrick von Platen and Suraj Patil and Anton Lozhkov and Pedro Cuenca and Nathan Lambert and Kashif Rasul and Mishig Davaadorj and Dhruv Nair and Sayak Paul and William Berman and Yiyi Xu and Steven Liu and Thomas Wolf},
  title = {Diffusers: State-of-the-art diffusion models},
  year = {2022},
  publisher = {GitHub},
  journal = {GitHub repository},
  howpublished = {\url{https://github.com/huggingface/diffusers}}
}

@article{lai2025principles,
  title={The principles of diffusion models},
  author={Lai, Chieh-Hsin and Song, Yang and Kim, Dongjun and Mitsufuji, Yuki and Ermon, Stefano},
  journal={arXiv preprint arXiv:2510.21890},
  year={2025}
}

@article{lecun2010mnist,
  title={{MNIST} handwritten digit database},
  author={LeCun, Yann and Cortes, Corinna and Burges, CJ},
  journal={ATT Labs [Online]. Available: http://yann.lecun.com/exdb/mnist},
  volume={2},
  year={2010}
}

@inproceedings{liu2015deep,
  title={Deep learning face attributes in the wild},
  author={Liu, Ziwei and Luo, Ping and Wang, Xiaogang and Tang, Xiaoou},
  booktitle={Proceedings of the IEEE International Conference on Computer Vision},
  pages={3730--3738},
  year={2015}
}

@inproceedings{chen2022improved,
  title={Improved Analysis of Score-based Generative Modeling: User-Friendly Bounds under Minimal Smoothness Assumptions},
  author={Chen, Hongrui and Lee, Holden and Lu, Jianfeng},
  booktitle={International Conference on Machine Learning},
  organization={PMLR},
  volume={202},
  pages={4764-4803},
  year={2023}
}

@inproceedings{chen2022sampling,
  title={Sampling is as easy as learning the score: Theory for diffusion models with minimal data assumptions},
  author={Chen, Sitan and Chewi, Sinho and Li, Jerry and Li, Yuanzhi and Salim, Adil and Zhang, Anru R},
  booktitle={International Conference on Learning Representations},
  year={2023}
}

@article{Anderson1982,
	Author = {Anderson, Brian D. O.},
	Journal = {Stochastic Processes and their Applications},
	Number = {3},
	Pages = {313-326},
	Title = {Reverse-time diffusion equation models},
	Volume = {12},
	Year = {1982}}

@book{Schervish1995,
  title={Theory of Statistics},
  author={Schervish, Mark J.},
  publisher={Springer},
  year={1995}
}

@Techreport{krizhevsky2009learning,
 author = {Krizhevsky, Alex and Hinton, Geoffrey},
 address = {Toronto, Ontario},
 institution = {University of Toronto},
 publisher = {Technical report, University of Toronto},
 title = {Learning multiple layers of features from tiny images},
 year = {2009},
 title_with_no_special_chars = {Learning multiple layers of features from tiny images},
 url = {https://www.cs.toronto.edu/~kriz/learning-features-2009-TR.pdf}
}

@inproceedings{song_generative_2020,
 author = {Song, Yang and Ermon, Stefano},
 booktitle = {Advances in Neural Information Processing Systems},
 editor = {H. Wallach and H. Larochelle and A. Beygelzimer and F. d\textquotesingle Alch\'{e}-Buc and E. Fox and R. Garnett},
 publisher = {Curran Associates, Inc.},
 title = {Generative Modeling by Estimating Gradients of the Data Distribution},
 volume = {32},
 year = {2019}
}

@article{GibbsSu2002,
	Author = {Gibbs, Alison L. and Su, Francis Edward},
	Journal = {International Statistical Review},
	Pages = {419--435},
	Title = {On choosing and bounding probability metrics},
	Volume = {70},
    Number={3},
	Year = {2002}}

@inproceedings{xiao_tackling_2022,
    title = {Tackling the {Generative} {Learning} {Trilemma} with {Denoising} {Diffusion} {GANs}},
    author = {Xiao, Zhisheng and Kreis, Karsten and Vahdat, Arash},
    year = {2022},
    booktitle={International Conference on Learning Representations}
}

@article{nie2026transition,
  title={Transition Matching Distillation for Fast Video Generation},
  author={Nie, Weili and Berner, Julius and Ma, Nanye and Liu, Chao and Xie, Saining and Vahdat, Arash},
  journal={arXiv preprint arXiv:2601.09881},
  year={2026}
}

@book{Bakry2014,
	Author = {Bakry, Dominique and Gentil, Ivan and Ledoux, Michel},
	Volume={348},
	publisher={Springer},
    address={Cham},
	Title = {Analysis and Geometry of Markov Diffusion Operators},
	Year = {2014}}

@InProceedings{Follmer1985,
author={F{\"o}llmer, Hans},
editor={Metivier, M.
and Pardoux, E.},
title={An entropy approach to the time reversal of diffusion processes},
booktitle={Stochastic Differential Systems Filtering and Control},
year={1985},
publisher={Springer},
address={Berlin, Heidelberg},
pages={156-163}
}

@article{Cattiaux2023,
	Author ={Cattiaux, Patrick and Conforti, Giovanni and Gentil, Ivan and L\'{e}onard, Christian},
	Journal = {Annales de l'Institut Henri Poincar\'{e}, Probabilit\'{e}s et Statistiques},
	Pages = {1844-1881},
	Title = {Time reversal of diffusion processes under a finite entropy condition},
	Volume = {59},
    Number={4},
	Year = {2023}}

@inproceedings{graikos_fast_2025,
 author = {Graikos, Alexandros and Jojic, Nebojsa and Samaras, Dimitris},
 booktitle = {Advances in Neural Information Processing Systems},
 editor = {D. Belgrave and C. Zhang and H. Lin and R. Pascanu and P. Koniusz and M. Ghassemi and N. Chen},
 pages = {48205--48240},
 publisher = {Curran Associates, Inc.},
 title = {Fast constrained sampling in pre-trained diffusion models},
 volume = {38},
 year = {2025}
}

@article{detlefsen2022torchmetrics,
  title   = {Torch{M}etrics: Measuring Reproducibility in {P}y{T}orch},
  author  = {Detlefsen, Nicki Skafte and Borovec, Jiri and Schock, Justus and Harsh, Ananya and Koker, Teddy and Di Liello, Luca and Stancl, Daniel and Quan, Changsheng and Grechkin, Maxim and Falcon, William},
  journal = {Journal of Open Source Software},
  year    = {2022},
  url     = {https://github.com/Lightning-AI/torchmetrics}
}

@inproceedings{wang2022zero,
  title={Zero-shot image restoration using denoising diffusion null-space model},
  author={Wang, Yinhuai and Yu, Jiwen and Zhang, Jian},
  booktitle={International Conference on Learning Representations},
  year={2023}
}


\appendix

\begin{center}
\Large \bf Stochastic Transition-Map Distillation for\\
Fast Probabilistic Inference \vspace{3pt}\\ {\normalsize Supplementary Material}
\end{center}

The supplementary material is organized as follows.
\begin{itemize}
\item
In Appendix~\ref{App_proofs}, we provide the technical proofs of the main results in the paper.
\item 
In Appendix~\ref{App_additional}, we present additional technical results and their proofs.
\item 
In Appendix~\ref{App_inverse_problems}, we discuss inverse problems using STMD.
\item 
In Appendix~\ref{App_exp_details}, we provide additional details about our numerical experiments.
\end{itemize}

\section{Proofs of the Main Results}\label{App_proofs}

\subsection{Proof of Proposition~\ref{MFprop}}

\begin{proof}
    Let $v_s(z_s) = \E[z_1 - z_0|z_s]$.  
    Starting with Equation \eqref{CMF}, we obtain
    \begin{align*}
        \mathcal{L}_{\mathrm{CMF}} & = \E\left[ \left\| u^{\theta}(z_s, r, s) -\texttt{sg}\!\left( (z_1 - z_0) - v_s(z_s) + v_s(z_s) - (s-r)\frac{\d}{\d s} u^{\theta}(z_s, r, s) \right) \right\|^2 \right] \\
        & = \E\left[ \left\| u^{\theta}(z_s, r, s) -\texttt{sg}\!\left( v_s(z_s) - (s-r)\frac{\d}{\d s} u^{\theta}(z_s, r, s) \right) \right\|^2 \right] \\
        & \quad+ \E\left[ \left\| (z_1 - z_0) - v_s(z_s) \right\|^2 \right] \\
        &\quad\quad + 2 \E \left[ \left\langle u^{\theta}(z_s, r, s) -\texttt{sg}\!\left( v_s(z_s) - (s-r)\frac{\d}{\d s} u^{\theta}(z_s, r, s) \right), (z_1 - z_0) - v_s(z_s) \right\rangle \right] \\
        & = \mathcal{L}_{\mathrm{MF}} + \E\left[ \left\| (z_1 - z_0) - v_s(z_s) \right\|^2 \right].
    \end{align*}
    By taking the gradient on both sides, and observing that the second term in the final equation does not depend on the model parameters, we obtain $\nabla_{\theta} \mathcal{L}_{\mathrm{CMF}} = \nabla_\theta \mathcal{L}_{\mathrm{MF}}$, which concludes the proof. 
\end{proof}


\subsection{Proof of Proposition~\ref{dMFprop}}

\begin{proof}
The proof follows the similar steps
as in the proof of Proposition~\ref{MFprop}.
    Let $v_s(z_s|x_t, t) = \E[z_1 - x_0|z_s, x_t, t]$.   
    Starting with Equation \eqref{condCMF}, we obtain
    \begin{align*}
        \mathcal{L}_{\mathrm{dCMF}} & = \E\Bigg[ \Bigg\| u^{\theta}(z_s, r, s,x_{t},t)\nonumber
        \\
        &\qquad\quad-\texttt{sg}\!\left( (z_1 - x_0) - v_s(z_s|x_t, t) + v_s(z_s|x_t, t) - (s-r)\frac{\d}{\d s} u^{\theta}(z_s, r, s,x_{t},t) \right) \Bigg\|^2 \Bigg] \\
        & = \E\left[ \left\| u^{\theta}(z_s, r, s,x_{t},t) -\texttt{sg}\!\left( v_s(z_s|x_{t},t) - (s-r)\frac{\d}{\d s} u^{\theta}(z_s, r, s,x_{t},t) \right) \right\|^2 \right] \\
        & \qquad+ \E\left[ \left\| (z_1 - x_0) - v_s(z_s|x_{t},t) \right\|^2 \right]. \\
        &\qquad\qquad + 2\E \Bigg[ \Bigg\langle u^{\theta}(z_s, r, s) -\texttt{sg}\!\left( v_s(z_s|x_{t},t) - (s-r)\frac{\d}{\d s} u^{\theta}(z_s, r, s) \right), 
        \nonumber
        \\
        &\qquad\qquad\qquad\qquad\qquad\qquad\qquad\qquad\qquad\qquad\qquad
        (z_1 - x_0) - v_s(z_s|x_{t},t) \Bigg\rangle \Bigg] \\
        & = \mathcal{L}_{\mathrm{dMF}} + \E\left[ \left\| (z_1 - x_0) - v_s(z_s|x_{t},t) \right\|^2 \right].
    \end{align*}
    By taking the gradient on both sides, and observing that the second term in the final equation does not depend on the model parameters, we obtain $\nabla_{\theta} \mathcal{L}_{\mathrm{dCMF}} = \nabla_\theta \mathcal{L}_{\mathrm{dMF}}$, which concludes the proof. 
\end{proof}

\subsection{Proof of Theorem~\ref{thm:mean:flow:convergence}}

\begin{proof}
Let $z_{0}=z_{1}-u(z_{1},0,1)$
and $\hat{z}_{0}=z_{1}-\hat{u}(z_{1},0,1)$
where $z_{1}\sim \rho_{1}$. 
Then $z_{0}\sim \rho_{0}$ and $\hat{z}_{0}\sim \hat{\rho}_{0}$. 

By the definition of the 2-Wasserstein distance, 
we have the bound:
\begin{equation}\label{to:bound}
\mathcal{W}_{2}^{2}(\hat{\rho}_{0},\rho_{0})
\leq\mathbb{E}\left\Vert u(z_{1},0,1)-\hat{u}(z_{1},0,1)\right\Vert^{2}.
\end{equation}
To upper bound the right-hand side of \eqref{to:bound}, let us introduce:
\begin{equation}
E_{r,s}:=\mathbb{E}_{z_{s}}\left\Vert\left(z_{s}-(s-r)u(z_{s},r,s)\right)-\left(z_{s}-(s-r)\hat{u}(z_{s},r,s)\right)\right\Vert^{2},
\end{equation}
for any $0 \leq r  < s \leq 1$. 
Then, it follows that
\begin{equation}
E_{r,s}=\mathbb{E}_{z_{s}}\left\Vert (r-s)(u(z_{s},r,s)-\hat{u}(z_{s},r,s))\right\Vert^{2},
\end{equation}
and the right hand side of \eqref{to:bound} is simply $E_{0,1}$. 
Next, we can compute that for any $0 \leq r  < s \leq 1$,
\begin{align}
\frac{\partial}{\partial s}
E_{r,s}
&=2\mathbb{E}_{z_{s}}\left\langle (r-s)(u(z_{s},r,s)-\hat{u}(z_{s},r,s)),\frac{\d}{\d s}(r-s)u(z_{s},r,s)-\frac{\d}{\d s}(r-s)\hat{u}(z_{s},r,s)\right\rangle
\nonumber
\\
&=2\mathbb{E}_{z_{s}}\left\langle (r-s)(u(z_{s},r,s)-\hat{u}(z_{s},r,s)),-u(z_{s},r,s)+\hat{u}(z_{s},r,s)\right\rangle
\nonumber
\\
&\quad
+2\mathbb{E}_{z_{s}}\left\langle (r-s)(u(z_{s},r,s)-\hat{u}(z_{s},r,s)),(r-s)\frac{\d u(z_{s},r,s)}{\d s}-(r-s)\frac{\d\hat{u}(z_{s},r,s)}{\d s}\right\rangle
\nonumber
\\
&=2\mathbb{E}_{z_{s}}\left\langle (r-s)(u(z_{s},r,s)-\hat{u}(z_{s},r,s)),-u(z_{s},r,s)+\hat{u}(z_{s},r,s)\right\rangle
\nonumber
\\
&\quad
+2\mathbb{E}_{z_{s}}\left\langle (r-s)(u(z_{s},r,s)-\hat{u}(z_{s},r,s)),u(z_{s},r,s)-v_{s}(z_{s})-(r-s)\frac{\d}{\d s}\hat{u}(z_{s},r,s)\right\rangle
\nonumber
\\
&=2\mathbb{E}_{z_{s}}\left\langle (r-s)(u(z_{s},r,s)-\hat{u}(z_{s},r,s)),\hat{u}(z_{s},r,s)-v_{s}(z_{s})-(r-s)\frac{\d}{\d s}\hat{u}(z_{s},r,s)\right\rangle
\nonumber
\\
&\leq
\mathbb{E}_{z_{s}}\Vert (r-s)(u(z_{s},r,s)-\hat{u}(z_{s},r,s))\Vert^{2}
\nonumber
\\
&\qquad\qquad\qquad
+\mathbb{E}_{z_{s}}\left\Vert\hat{u}(z_{s},r,s)-v_{s}(z_{s})-(r-s)\frac{\d}{\d s}\hat{u}(z_{s},r,s)\right\Vert^{2}
\nonumber
\\
&=E_{r,s}+\mathbb{E}_{z_{s}}\left\Vert\hat{u}(z_{s},r,s)-v_{s}(z_{s})-(r-s)\frac{\d}{\d s}\hat{u}(z_{s},r,s)\right\Vert^{2}
\nonumber
\\
&=
E_{r,s}+\varepsilon_{r,s}.\label{key:ineq:1}
\end{align}
By product rule and \eqref{key:ineq:1}, we have
\begin{align} \label{key:ineq:2}
\frac{\partial}{\partial s}\left(e^{-s}E_{r,s}\right)
=-e^{-s}E_{r,s}+e^{-s}\frac{\partial}{\partial s}E_{r,s}
\leq e^{-s}\varepsilon_{r,s}.
\end{align}
By letting $r=0$ and integrating $s$ from $0$ to $1$ in \eqref{key:ineq:2}, we get
\begin{align}
e^{-1}E_{0,1}-e^{-1}E_{0,0}
=e^{-1}E_{0,1}\leq\int_{0}^{1}e^{-s}\varepsilon_{0,s}\mathrm{d}s
\leq 
\int_{0}^{1}\varepsilon_{0,s}\mathrm{d}s
=\mathbb{E}_{s}[\varepsilon_{0,s}]
\leq
\varepsilon,
\end{align}
where we used the fact that $E_{0,0}=0$, which implies that
\begin{equation}
E_{0,1}\leq e\varepsilon.
\end{equation}
This completes the proof. 
\end{proof}

\subsection{Proof of Corollary~\ref{cor:cond_meanflow_convergence}}

\begin{proof}

Let $x_0 = z_{0}=z_{1}-u(z_{1},0,1, x_t, t)$
and $\hat{x}_0 = \hat{z}_{0}=z_{1}-\hat{u}(z_{1},0,1, x_t, t)$
where $z_{1}\sim \rho_{1}$.
Then $x_{0}\sim p_{0|t}$ and $\hat{x}_{0}\sim \hat{p}_{0|t}$. 



First, given a conditional Mean Flow model $\hat{u}=\hat{u}(z_s, r, s, x_t, t)$, recall that
\begin{equation} 
    \gamma(r, s, t, x_t) =  \E_{z_{s}} \left[ \left\| \hat{u} - \left( v_{s}(z_s| x_t, t) - (s-r)\frac{\d}{\d s} \hat{u} \right) \right\|^2\right].  
\end{equation}
Under our assumption, 
$\E_{s, t, x_t} [ \gamma(0, s, t, x_t)] \leq \varepsilon<\infty$,
which implies that
$\mathbb{E}_{s}[\gamma(0,s,t, x_t)]$ is finite for a.e. $t,x_{t}$. Using Theorem~\ref{thm:mean:flow:convergence}, we obtain for a.e. $t,x_{t}$, 
\begin{equation}\label{ineq:conditional}
    \mathcal{W}^2_2 ( p_{0|t}, \hat{p}_{0|t} ) \leq e\mathbb{E}_{s}[\gamma(0,s,t,x_{t})].
\end{equation}

By taking the expectation on both sides, we obtain 
\begin{equation}
    \E_{t, x_t} \left[ \mathcal{W}^2_2 ( p_{0|t}, \hat{p}_{0|t} ) \right] \leq e \varepsilon,
\end{equation}
which concludes the proof. 
\end{proof}


\subsection{Proof of Proposition \ref{prop:diffusion}}

\begin{proof}
Consider the coupling
\begin{align}
& x_{1}=\alpha_{1}x_{0}+\sigma_{1}\epsilon\sim p_{1},\label{coupling:1}
\\
&\tilde{x}_{1}=\epsilon\sim\tilde{p}_{1} = \mathcal{N}(0,I_{d}), \label{coupling:2}
\end{align}
where $\epsilon$ is independent of $x_{0}$, $\alpha_{1}^{2}=\exp\left(-\int_{0}^{1}\beta_{\tau}\mathrm{d}\tau\right)$ and $\sigma_{1}=\sqrt{1-\alpha_{1}^{2}}$.
Given a candidate conditional one-step Mean Flow model $\hat{u}$, define 
\begin{align}
    & \hat{x}_0 = z_1 - \hat{u}(z_1, 0, 1, x_1, 1), \qquad z_1 \sim \N(0, I_d), \quad x_1 \sim {p}_1,  \\
    & \tilde{x}_0 = z_1 - \hat{u}(z_1, 0, 1, \tilde x_1, 1), \qquad z_1 \sim N(0, I_d),\quad \tilde x_1 \sim \tilde{p}_1,
\end{align}
where $z_1$ is independent of $\epsilon, x_0, x_1, \tilde x_1$, and let $\hat p_0, \tilde p_0$ denote distributions for $\hat{x}_0, \tilde x_0$ respectively.
Furthermore, under our assumption, the map $\hat{u}(z_1, 0, 1, x_1, 1)$ is $L$-Lipschitz in $x_1$ in expectation over $z_1$, that is, for any $x_1,x'_1$, 
\begin{equation} \label{lip_in_x1}
    \E_{z_1} \left[ \left\| \hat{u}(z_1, 0, 1, x_1, 1) - \hat{u}(z_1, 0, 1, x'_1, 1)\right\|^2\right] \leq L^{2} \left\| x_1 - x'_1 \right\|^2. 
\end{equation} 

By the definition of the 2-Wasserstein distance, we obtain 
\begin{align}
\mathcal{W}_{2}^{2}(\hat p_{0},\tilde{p}_{0}) & \leq \E \left\| \hat x_0 - \tilde x_0 \right\| ^2   \nonumber \\ 
& \leq \E \left\| \hat{u}(z_1, 0, 1, x_1, 1) - \tilde{u}(z_1, 0, 1, \tilde x_1, 1) \right\|^2   \nonumber \\
& \leq L^2 \, \E \left\| x_1 - \tilde x_1\right\|^2  \label{trick_1}  \\
& = L^2 \mathbb{E}\left\|\alpha_{1}x_{0}+(1-\sigma_{1})\epsilon \right\|^{2}   \nonumber \\
& = L^2 \left( \alpha_{1}^{2}\mathbb{E}\Vert x_{0}\Vert^{2}+(1-\sigma_{1})^{2}d \right)  \label{trick_2} \\
& \leq L^2 \left( \alpha_{1}^{2}m_{2}+(1-\sigma_{1})^{2}d \right), \label{conclude:1}
\end{align}
where in \eqref{trick_1} we used the Lipschitz-in-$x_1$ property of the flow map, and \eqref{trick_2} used the independence of $\epsilon\sim\mathcal{N}(0,I_{d})$ and $x_{0}$ and the finiteness of $m_{2}=\mathbb{E}\Vert x_{0}\Vert^{2}$.

Hence, we conclude from \eqref{conclude:1} that 
\begin{align}
\mathcal{W}_{2}^{2}(\hat{p}_{0}, \tilde{p}_{0})
\leq
L^2\left(\alpha_{1}^{2}m_{2}+(1-\sigma_{1})^{2}d\right).
\end{align}
This completes the proof.
\end{proof}

\subsection{Proof of Corollary~\ref{cor:diffusion}}
By applying the triangle inequality for the 2-Wasserstein distance, Young's inequality and Proposition~\ref{prop:diffusion},
we have
\begin{align}
\mathcal{W}_{2}^{2}(\tilde{p}_{0},p_{0})
&\leq 2\left(\mathcal{W}_{2}^{2}(\tilde{p}_{0},\hat{p}_{0})+\mathcal{W}_{2}^{2}(\hat{p}_{0},p_{0})\right)
\nonumber\\
& = 2\left(L^2 (\alpha_{1}^{2}m_{2}+(1-\sigma_{1})^{2}d) +\mathcal{W}_{2}^{2}(\hat{p}_{0},p_{0})\right). \label{step1} 
\end{align}

Since, by assumption, $\E_{s, x_1} \left[ \gamma(0, s, 1, x_1) \right] \leq \varepsilon_1$, we obtain 
\begin{align}
    \mathcal{W}_{2}^{2}(\hat{p}_{0},p_{0}) & = \mathcal{W}_{2}^{2}\left(\E_{x_1} \left[\hat{p}_{0|1}(x_0|x_1)\right],\E_{x_1} \left[p_{0|1}(x_0|x_1)\right]\right)\nonumber \\
    & \leq \E_{x_1} \Big[ \mathcal{W}_{2}^2 \big(\hat{p}_{0|1}(x_0|x_1), p_{0|1}(x_0|x_1) \big)\Big]\nonumber \\
    & \leq \E_{s, x_1} \left[ e \gamma(0, s, 1, x_1)\right]\nonumber 
    \\
    &\leq e \varepsilon_1, \label{step2}
\end{align}
where we applied \eqref{ineq:conditional}.

Combining Equations \eqref{step1} and \eqref{step2}, we obtain 
\begin{equation}
    \mathcal{W}_{2}^{2}(\tilde{p}_{0},p_{0}) \leq 2\left(L^2 \left(\alpha_{1}^{2}m_{2}+(1-\sigma_{1})^{2}d\right) + e\varepsilon_1 \right),
\end{equation}
which concludes the proof. 



\section{Additional Technical Results}\label{App_additional}

Proposition~\ref{prop:diffusion} relies
on a Lipschitz assumption \eqref{lip_in_x1:main}. 
We will show that 
if we assume $\tilde{p}_{1}$ and $\hat{p}_{1}$ have bounded support, which naturally arises
in the setting when the data distribution $p_{0}$ has bounded support, then we can remove this
Lipschitz assumption.

\begin{proposition}\label{prop:diffusion:bounded}
    Let $\hat{p}(x_0|x_t)$ be a candidate conditional Mean Flow model that parametrizes the transition dynamics of the reverse SDE kernel \eqref{sampling_SDE}, and let $\tilde{p}_1 = \int_{\mathbb{R}^{d}} \hat{p}(x_0|x_1) \N(x_1; 0, I_d) \d x_1$ and $\hat{p}_1 = \int_{\mathbb{R}^{d}} \hat{p}(x_0|x_1) p(x_1) \d x_1$ where $p_1(x_1)$ is the density of $x_1$ associated with \eqref{sampling_SDE}. 
    Assuming that $\tilde{p}_{1}$ and $\hat{p}_{1}$ have bounded support, which is contained in a Euclidean ball centered at the origin with radius $R>0$ in $\mathbb{R}^{d}$, we obtain
\begin{equation}
        \mathcal{W}_2^2 (\hat p_1, \tilde p_1)
        \leq
        2\sqrt{2\KL\left(p_{0}\Vert\mathcal{N}(0,I_{d})\right)}R^{2}\alpha_{1},
\end{equation}
where $\alpha_1$ is defined through \eqref{noise_kernel}. 
\end{proposition}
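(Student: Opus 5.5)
The plan is to bound the Wasserstein distance by a total variation distance, which is legitimate because both measures live in a bounded set. The total variation is then pushed back, through the shared generative kernel, to the discrepancy between $p_1$ and $\N(0,I_d)$ at the terminal diffusion time. That discrepancy is controlled by Pinsker's inequality and by the exponential entropy decay of the Ornstein--Uhlenbeck process \eqref{noise_sde}.

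\textbf{Step 1 (from $\mathcal{W}_2$ to total variation).} Both $\hat p_1$ and $\tilde p_1$ are supported in the ball of radius $R$ centered at the origin, whose diameter is $2R$. For measures on a set of diameter $D$, the standard bound of \cite{GibbsSu2002} gives $\mathcal{W}_2^2(\mu,\nu)\le D^2\,\mathrm{TV}(\mu,\nu)$. To see this, take the maximal coupling, under which the two coordinates differ with probability $\mathrm{TV}(\mu,\nu)$, and bound $\|x-y\|^2\le D^2$ on that event. This yields
\[
\mathcal{W}_2^2(\hat p_1,\tilde p_1)\le 4R^2\,\mathrm{TV}(\hat p_1,\tilde p_1).
\]

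\textbf{Step 2 (data processing).} The two measures are obtained by applying the same Markov kernel $\hat p(x_0\mid x_1)$ to two different inputs: $p_1$ for $\hat p_1$ and $\N(0,I_d)$ for $\tilde p_1$. Total variation is non-increasing under a common Markov kernel, so
\[
\mathrm{TV}(\hat p_1,\tilde p_1)\le \mathrm{TV}\big(p_1,\N(0,I_d)\big).
\]
By Pinsker's inequality, this is at most $\sqrt{\tfrac12\KL(p_1\Vert\N(0,I_d))}$.

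\textbf{Step 3 (entropy decay, the main step).} I will show $\KL(p_1\Vert\N(0,I_d))\le \alpha_1^2\,\KL(p_0\Vert\N(0,I_d))$. The process \eqref{noise_sde} is a time-changed OU process with invariant measure $\N(0,I_d)$. The standard Gaussian satisfies a log-Sobolev inequality with constant $1$ \citep{Bakry2014}. The de Bruijn identity along \eqref{noise_sde} gives
\[
\frac{\d}{\d t}\KL(p_t\Vert\N(0,I_d))=-\frac{\beta_t}{2}\,I(p_t\Vert\N(0,I_d)),
\]
where $I$ denotes the relative Fisher information, and the log-Sobolev inequality gives $I\ge 2\KL$. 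Hence $\frac{\d}{\d t}\KL\le-\beta_t\KL$, and Gr\"onwall's inequality yields $\KL(p_1\Vert\N)\le e^{-\int_0^1\beta_\tau\d\tau}\KL(p_0\Vert\N)=\alpha_1^2\KL(p_0\Vert\N)$. If $\KL(p_0\Vert\N)=\infty$, the bound is trivial.

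The point needing care is tracking the constants in the time change: the factor $\tfrac12$ in the drift and $\sqrt{\beta_t}$ in the diffusion must combine to give exactly the rate $\alpha_1^2$. If the regularity needed for de Bruijn's identity is a concern, I would instead use the known contraction of KL under the OU semigroup $P_\tau$, namely $\KL(\mu P_\tau\Vert\gamma)\le e^{-2\tau}\KL(\mu\Vert\gamma)$, evaluated at the effective time $\tau=\tfrac12\int_0^1\beta_\tau\,\d\tau$. Here $\gamma=\N(0,I_d)$ and $P_\tau$ is the semigroup of $\d x=-x\,\d\tau+\sqrt2\,\d w$. This avoids any regularity issues.

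\textbf{Combining.} Steps 2 and 3 give $\mathrm{TV}(\hat p_1,\tilde p_1)\le \alpha_1\sqrt{\KL(p_0\Vert\N)/2}$. Inserting this into Step 1,
\[
\mathcal{W}_2^2(\hat p_1,\tilde p_1)\le 4R^2\alpha_1\sqrt{\KL(p_0\Vert\N)/2}=2\sqrt{2\KL(p_0\Vert\N(0,I_d))}\,R^2\alpha_1,
\]
which is exactly the claimed bound.
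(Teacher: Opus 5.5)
Your proposal is correct and follows the paper's proof step for step: the bounded-support estimate $\mathcal{W}_2^2\le 4R^2\,\mathrm{TV}$ from \cite{GibbsSu2002}, data processing through the common kernel $\hat p(x_0\mid x_1)$, Pinsker's inequality, and the contraction $\KL(p_1\Vert\N(0,I_d))\le\alpha_1^2\,\KL(p_0\Vert\N(0,I_d))$, which comes from time-changing \eqref{noise_sde} to the standard OU process at $\tau(1)=\tfrac12\int_0^1\beta_\tau\,\d\tau$. Your maximal-coupling argument and your de Bruijn/log-Sobolev computation simply make explicit the facts the paper cites, and the constants combine correctly to $2\sqrt{2\KL(p_0\Vert\N(0,I_d))}\,R^2\alpha_1$.
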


\begin{proof}
Under our assumption, $\tilde{p}_{1}$ and $\hat{p}_{1}$ have bounded support, which is contained in a Euclidean ball centered at the origin with radius $R>0$ in $\mathbb{R}^{d}$. 
Then, the 2-Wasserstein distance
can be upper bounded by the total variation (TV) distance \citep{GibbsSu2002}:
\begin{equation}\label{apply:1}
\mathcal{W}_{2}(\hat{p}_{1},\tilde{p}_{1})
\leq 2R\sqrt{\mathrm{TV}(\hat{p}_{1},\tilde{p}_{1})}.
\end{equation}
Next, we recall that
\begin{align}
&\tilde{p}_1 = \int_{\mathbb{R}^{d}} \hat{p}(x_0|x_1) \N(x_1; 0, I_d) \d x_1,
\\
&\hat{p}_1 = \int_{\mathbb{R}^{d}} \hat{p}(x_0|x_1) p(x_1) \d x_1,
\end{align}
where $p_1(x_1)$ is the density of $x_1$ associated with \eqref{sampling_SDE}. 

By data processing inequality, we get
\begin{equation}\label{apply:2}
\mathrm{TV}(\hat{p}_{1},\tilde{p}_{1})
\leq
\mathrm{TV}\left(\N(x_1; 0, I_d),p(x_{1})\right).
\end{equation}
Moreover, by Pinsker's inequality, the TV distance can be upper bounded using the KL divergence as follows:
\begin{equation}\label{apply:3}
\mathrm{TV}\left(\N(x_1; 0, I_d),p(x_{1})\right)
\leq\sqrt{\frac{1}{2}\KL\left(p(x_{1})\Vert\N(x_1; 0, I_d)\right)}.
\end{equation}
Note that $p(x_{1})$ denotes the distribution 
of $x_{1}$, where $x_t$ follows the Ornstein-Uhlenbeck process \eqref{noise_sde}:
\begin{equation}
    \d x_t = - \frac{1}{2} \beta_t x_t \, \d t + \sqrt{\beta_t} \, \d w_t, \quad x_0 \sim p_0.
\end{equation}
By a deterministic time change, one can see
that $x_{t}=y_{\tau(t)}$ in distribution, where
$y_{t}$ follows the rescaled Ornstein-Uhlenbeck process:
\begin{equation}
    \d y_t = -  y_t \, \d t + \sqrt{2} \, \d w_t, \quad y_0 \sim p_0,    
\end{equation}
with
\begin{equation}
\tau(t):=\frac{1}{2}\int_{0}^{t}\beta_{\tau}\d\tau.
\end{equation}
It follows from the proof of Theorem~2 in \cite{chen2022sampling} and Theorem 5.2.1. in \cite{Bakry2014} that 
\begin{equation}
\KL\left(\mathrm{Law}(y_{t})\Vert\mathcal{N}(0,I_{d})\right)
\leq e^{-2t}\KL\left(\mathrm{Law}(y_{0})\Vert\mathcal{N}(0,I_{d})\right)
=e^{-2t}\KL\left(p_{0}\Vert\mathcal{N}(0,I_{d})\right).
\end{equation}
Therefore, we have
\begin{equation}
\KL\left(\mathrm{Law}(x_{t})\Vert\mathcal{N}(0,I_{d})\right)
\leq e^{-2\tau(t)}\KL\left(p_{0}\Vert\mathcal{N}(0,I_{d})\right).
\end{equation}
In particular, by letting $t=1$ with $x_{1}\sim p_{1}$, we obtain
\begin{equation}\label{apply:4}
\KL\left(p_{1}\Vert\mathcal{N}(0,I_{d})\right)
\leq e^{-2\tau(1)}\KL\left(p_{0}\Vert\mathcal{N}(0,I_{d})\right)
=\alpha_{1}^{2}\KL\left(p_{0}\Vert\mathcal{N}(0,I_{d})\right),
\end{equation}
where $\alpha_1$ is defined through \eqref{noise_kernel}. 

Hence, by combining \eqref{apply:1}, \eqref{apply:2}, \eqref{apply:3} and \eqref{apply:4}, we conclude that
\begin{equation}
\mathcal{W}_2^2 (\hat p_1, \tilde p_1)
\leq
2\sqrt{2}R^{2}\alpha_{1}\left(\KL\left(p_{0}\Vert\mathcal{N}(0,I_{d})\right)\right)^{1/2}.
\end{equation}
This completes the proof.
\end{proof}

By applying Corollary~\ref{cor:cond_meanflow_convergence} and Proposition~\ref{prop:diffusion:bounded}, we obtain the following corollary which is an analogue of Corollary~\ref{cor:diffusion}. 

\begin{corollary}\label{cor:diffusion:bounded}
Under the assumptions of Theorem~\ref{thm:mean:flow:convergence}, Corollary~\ref{cor:cond_meanflow_convergence}, 
and Proposition~\ref{prop:diffusion:bounded}, 
we have
\begin{align}\label{cor:bounded:RHS}
\mathcal{W}_{2}^{2}(\tilde{p}_{0},p_{0})  \leq 2\left(2\sqrt{2\kappa_{0}}R^{2}\alpha_{1} + e\varepsilon_1 \right),
\end{align}
provided that $\E_{s, x_1} [\gamma(0, s, 1, x_1)] \leq \varepsilon_1$, where $\kappa_{0}:=\KL\left(p_{0}\Vert\mathcal{N}(0,I_{d})\right)$ and $\alpha_1$ is defined through \eqref{noise_kernel}. 
\end{corollary}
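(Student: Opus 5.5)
The plan mirrors the proof of Corollary~\ref{cor:diffusion}, with Proposition~\ref{prop:diffusion} replaced by Proposition~\ref{prop:diffusion:bounded}.

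\textbf{Step 1: split the error.} I will use the triangle inequality for $\mathcal{W}_2$ together with Young's inequality $(a+b)^2\le 2a^2+2b^2$:
\begin{equation*}
\mathcal{W}_{2}^{2}(\tilde{p}_{0},p_{0})\leq 2\left(\mathcal{W}_{2}^{2}(\tilde{p}_{0},\hat{p}_{0})+\mathcal{W}_{2}^{2}(\hat{p}_{0},p_{0})\right).
\end{equation*}
Here $\hat p_0$ is the model's output when it is initialized with the true marginal $p_1$, and $\tilde p_0$ is its output when initialized with $\N(0,I_d)$.

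\textbf{Step 2: the model-error term.} I will reuse the chain \eqref{step2} from the proof of Corollary~\ref{cor:diffusion} verbatim. Since $p_0=\E_{x_1}[p_{0|1}]$ and $\hat p_0=\E_{x_1}[\hat p_{0|1}]$, joint convexity of $\mathcal{W}_2^2$ under mixtures gives $\mathcal{W}_2^2(\hat p_0,p_0)\le \E_{x_1}\mathcal{W}_2^2(\hat p_{0|1},p_{0|1})$. The joint convexity follows by gluing the conditional optimal couplings measurably over $x_1$. Applying the conditional bound \eqref{ineq:conditional} at $t=1$ (i.e., Theorem~\ref{thm:mean:flow:convergence} for each fixed $x_1$) then yields $\mathcal{W}_2^2(\hat p_0,p_0)\le e\,\E_{s,x_1}[\gamma(0,s,1,x_1)]\le e\varepsilon_1$.

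\textbf{Step 3: the initialization term.} I will invoke Proposition~\ref{prop:diffusion:bounded}. Despite its subscript-$1$ notation, that proposition bounds the Wasserstein distance between exactly the two push-forwards $\int\hat p(x_0|x_1)p(x_1)\,\d x_1$ and $\int\hat p(x_0|x_1)\N(x_1;0,I_d)\,\d x_1$, namely $\hat p_0$ and $\tilde p_0$. So under its bounded-support hypothesis,
\begin{equation*}
\mathcal{W}_2^2(\tilde p_0,\hat p_0)\le 2\sqrt{2\kappa_0}R^2\alpha_1 .
\end{equation*}
This bound uses only the data-processing inequality for the common kernel $\hat p(\cdot|x_1)$, Pinsker's inequality, and exponential KL decay of the OU process. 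It needs no Lipschitz assumption.

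\textbf{Conclusion and main obstacle.} Substituting Steps 2 and 3 into Step 1 gives \eqref{cor:bounded:RHS}. The only delicate point is notational bookkeeping: I must check that the distributions called $\hat p_1,\tilde p_1$ in Proposition~\ref{prop:diffusion:bounded} coincide with $\hat p_0,\tilde p_0$ here, and that the $R$-ball support assumption is stated for these generated distributions. Once that identification is made, the argument is a direct combination of the two earlier results.
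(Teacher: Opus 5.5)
Your proposal is correct and follows the paper's proof. The paper likewise splits $\mathcal{W}_2^2(\tilde p_0,p_0)$ using the triangle and Young inequalities, bounds $\mathcal{W}_2^2(\tilde p_0,\hat p_0)$ by Proposition~\ref{prop:diffusion:bounded} (implicitly identifying that proposition's $\hat p_1,\tilde p_1$ with the present $\hat p_0,\tilde p_0$ as you do), and bounds $\mathcal{W}_2^2(\hat p_0,p_0)$ by the mixture-convexity chain \eqref{step2}, which it omits as ``following the same lines'' as Corollary~\ref{cor:diffusion} and which you spell out explicitly.
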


\begin{proof}
By applying the triangle inequality for the 2-Wasserstein distance, Young's inequality and Proposition~\ref{prop:diffusion:bounded},
we have
\begin{align}
\mathcal{W}_{2}^{2}(\tilde{p}_{0},p_{0})
&\leq 2\left(\mathcal{W}_{2}^{2}(\tilde{p}_{0},\hat{p}_{0})+\mathcal{W}_{2}^{2}(\hat{p}_{0},p_{0})\right)
\nonumber\\
& = 2\left(2\sqrt{2\KL\left(p_{0}\Vert\mathcal{N}(0,I_{d})\right)}R^{2}\alpha_{1} +\mathcal{W}_{2}^{2}(\hat{p}_{0},p_{0})\right).
\end{align}
The rest of the proof follows the same lines as in the proof of Corollary~\ref{cor:diffusion} and is hence omitted here.
\end{proof}

Note that in Corollary~\ref{cor:diffusion:bounded}, 
the right hand side of \eqref{cor:bounded:RHS}
can be made small if
$\alpha_{1}=\exp(-\frac{1}{2}\int_{0}^{1}\beta_{\tau}\d\tau)$
is chosen to be small.  
Indeed, we have the following corollary.

\begin{corollary}\label{cor:bounded:small}
Under the setting of Corollary~\ref{cor:diffusion:bounded},
\begin{equation}
\mathcal{W}_{2}^{2}(\tilde{p}_{0},p_{0})  \leq 2\left(2\sqrt{2}R^{2}+ e  \right)\varepsilon_1,
\end{equation}
provided that $\alpha_{1}\leq\varepsilon_{1}/\sqrt{\kappa_{0}}$.
\end{corollary}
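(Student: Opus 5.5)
The plan is to start from the bound of Corollary~\ref{cor:diffusion:bounded},
\begin{equation*}
\mathcal{W}_{2}^{2}(\tilde{p}_{0},p_{0}) \leq 2\left(2\sqrt{2\kappa_{0}}R^{2}\alpha_{1} + e\varepsilon_1 \right),
\end{equation*}
which holds under exactly the standing assumptions. Everything then reduces to controlling the single $\alpha_1$-dependent term by a multiple of $\varepsilon_1$.

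Next, insert the hypothesis $\alpha_{1}\leq\varepsilon_{1}/\sqrt{\kappa_{0}}$. Since $\sqrt{\kappa_0}\ge 0$, this gives $\sqrt{\kappa_0}\,\alpha_1\le \varepsilon_1$. Substituting into the first term yields
\begin{equation*}
2\sqrt{2}\sqrt{\kappa_0}\,R^{2}\alpha_{1}\leq 2\sqrt{2}R^{2}\varepsilon_{1}.
\end{equation*}
Adding $e\varepsilon_1$ and factoring out $\varepsilon_1$ gives the bound $2(2\sqrt{2}R^2+e)\varepsilon_1$, as claimed.

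The only point needing care is the degenerate case $\kappa_0=0$, where the hypothesis reads $\alpha_1\le\infty$. There the $\alpha_1$-term in Corollary~\ref{cor:diffusion:bounded} vanishes identically, so the bound $2e\varepsilon_1$ holds and is even stronger. I will handle this by stating the hypothesis in the multiplied form $\sqrt{\kappa_0}\,\alpha_1\le\varepsilon_1$, which is equivalent when $\kappa_0>0$ and trivially true when $\kappa_0=0$.

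I will also check that $\kappa_0<\infty$ is implicitly assumed, since otherwise Proposition~\ref{prop:diffusion:bounded} is vacuous. Apart from that, I expect no real obstacle: the argument is direct substitution.
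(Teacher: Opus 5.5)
Your proposal is correct and matches the paper's argument: the paper simply calls the result an immediate consequence of Corollary~\ref{cor:diffusion:bounded}, which amounts to substituting $\sqrt{\kappa_0}\,\alpha_1\le\varepsilon_1$ into the term $2\sqrt{2\kappa_0}R^2\alpha_1$, exactly as you do. Your remarks on the degenerate case $\kappa_0=0$ and the implicit requirement $\kappa_0<\infty$ are harmless refinements that the paper leaves unstated.
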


\begin{proof}
It is an immediate consequence of Corollary~\ref{cor:diffusion:bounded}.
\end{proof}

Similarly, for Corollary~\ref{cor:diffusion}
for the distributions with unbounded support, 
the right hand side of \eqref{cor:RHS}
in Corollary~\ref{cor:diffusion} can be made small
if $\alpha_{1}=\exp(-\frac{1}{2}\int_{0}^{1}\beta_{\tau}\d\tau)$
is chosen to be small.  
Indeed, we have the following corollary.

\begin{corollary}\label{cor:small}
Under the setting of Corollary~\ref{cor:diffusion},
\begin{equation}
\mathcal{W}_{2}^{2}(\tilde{p}_{0},p_{0})  
\leq 2\left(L^{2}+e\right)\varepsilon_{1},
\end{equation}
provided that $\alpha_{1}\leq\sqrt{\frac{-m_{2}+\sqrt{m_{2}^{2}+4\varepsilon_{1}d}}{2d}}$.
\end{corollary}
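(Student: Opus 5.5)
We start from the bound in Corollary~\ref{cor:diffusion},
\begin{equation*}
\mathcal{W}_{2}^{2}(\tilde{p}_{0},p_{0}) \leq 2\left(L^2 \left(\alpha_{1}^{2}m_{2}+(1-\sigma_{1})^{2}d\right) + e \varepsilon_1 \right),
\end{equation*}
so it suffices to show that $\alpha_1^2 m_2 + (1-\sigma_1)^2 d \leq \varepsilon_1$ under the stated condition on $\alpha_1$. Substituting this into the bound immediately gives $2(L^2\varepsilon_1 + e\varepsilon_1) = 2(L^2+e)\varepsilon_1$.

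The key step is to control $(1-\sigma_1)^2$ in terms of $\alpha_1$. Since $\sigma_1=\sqrt{1-\alpha_1^2}\in[0,1]$, we have
\begin{equation*}
1-\sigma_1 = \frac{1-\sigma_1^2}{1+\sigma_1} = \frac{\alpha_1^2}{1+\sigma_1}\leq \alpha_1^2,
\end{equation*}
hence $(1-\sigma_1)^2\leq \alpha_1^4$. It therefore suffices to ensure $d\,\alpha_1^4 + m_2\,\alpha_1^2 \leq \varepsilon_1$.

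To solve this, set $y=\alpha_1^2\geq 0$. The inequality becomes $d y^2 + m_2 y - \varepsilon_1 \leq 0$. This quadratic has nonnegative root $y^\ast = \frac{-m_2+\sqrt{m_2^2+4\varepsilon_1 d}}{2d}$, and the inequality holds exactly for $0\le y\le y^\ast$. The hypothesis $\alpha_1\leq \sqrt{y^\ast}$ is precisely $y\leq y^\ast$, which closes the argument.

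There is no real obstacle here. The only point needing care is the elementary inequality $1-\sigma_1\le\alpha_1^2$. It relies on $\sigma_1\ge 0$, which holds by the definition of $\sigma_1$ in \eqref{noise_kernel}.
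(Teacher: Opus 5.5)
Your proposal is correct and follows essentially the same route as the paper. Both arguments start from Corollary~\ref{cor:diffusion}, use $1-\sigma_1=\alpha_1^2/(1+\sigma_1)\le\alpha_1^2$ to reduce to $\alpha_1^2 m_2+\alpha_1^4 d\le\varepsilon_1$, and then read off the admissible range of $\alpha_1^2$ from the positive root of the quadratic.
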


\begin{proof}
First, we recall from Corollary~\ref{cor:diffusion} that
\begin{align}
\mathcal{W}_{2}^{2}(\tilde{p}_{0},p_{0})  \leq 2\left(L^2 \left(\alpha_{1}^{2}m_{2}+(1-\sigma_{1})^{2}d\right) + e\varepsilon_1 \right),
\end{align}
where $\alpha_{1}=\exp(-\frac{1}{2}\int_{0}^{1}\beta_{\tau}\d\tau)$
and $\sigma_{1}=\sqrt{1-\alpha_{1}^{2}}$.

Next, we can compute that
\begin{align}
\alpha_{1}^{2}m_{2}+(1-\sigma_{1})^{2}d
&=\alpha_{1}^{2}m_{2}+\left(1-\sqrt{1-\alpha_{1}^{2}}\right)^{2}d
\nonumber
\\ &=\alpha_{1}^{2}m_{2}+\left(\frac{\alpha_{1}^{2}}{1+\sqrt{1-\alpha_{1}^{2}}}\right)^{2}d
\nonumber
\\
&\leq
\alpha_{1}^{2}m_{2}+\alpha_{1}^{4}d
\leq\varepsilon_{1},
\end{align}
provided that
$\alpha_{1}^{2}\leq\frac{-m_{2}+\sqrt{m_{2}^{2}+4\varepsilon_{1}d}}{2d}$.
This completes the proof.
\end{proof}

\section{Inverse Problems Using STMD} \label{App_inverse_problems}

Inverse problems aim at constructing samples from the high dimensional distributions given partial observations. For example, in the context of Section~\ref{unc_stmd}, consider the problem of generating samples from the data distribution $x_0 \sim p_0$, conditional on noisy observations 
\begin{equation}
    y = \mathcal{A}(x_0) + \eta, \quad \eta \sim \N\left(0, \sigma_y^2 I_{n}\right),
\end{equation}
where $\mathcal{A}$ is a linear or nonlinear measurement operator and $y\in \R^n$ with $n<d$. There are many methods in the literature for solving such problems; see, for example, \cite{song2020score, chung_diffusion_2024, graikos_fast_2025, peng_noise_2025} and the references therein. 
Because solving inverse problems is not the primary purpose of our work, we will follow the simple setup of \cite{wang2022zero}, and leave the design of an inverse problem solver tailored to our STMD algorithm for future work.
To this end, given linear and noiseless measurement $y = M x$, where $M \in \R^{n \times d}$, and using $M^+ = M^{\top}(M M^{\top} )^{-1}$, we provide the following simple inference algorithm.

\begin{algorithm}[!ht] 
\caption{STMD conditional inference}\label{inference_algo_inpaint} 
\hspace*{\algorithmicindent}\textbf{Input:} Trained model $u^\theta(z_s, r, s, x_t, t)$, initial samples $x_1 \sim \N(0, I_d)$, number of inference steps $n_{\mathrm{inf}}$, number of Mean Flow steps $n_{\mathrm{mf}}$, mask $M$, observation $y=M x_0$. 
\begin{algorithmic}
\State $\Delta t \leftarrow \frac{1}{n_\mathrm{inf}}$, $\Delta s \leftarrow \frac{1}{n_\mathrm{mf}}$, $t \leftarrow 1$
\For{$k$ in $\mathtt{range}(n_{\mathrm{inf}})$}:
    \State $z_1 \sim \N(0, I_d)$, $s \leftarrow 1$
    \For{$i$ in $\mathtt{range}(n_{\mathrm{mf}})$}: 
        \State $z_{s - \Delta s} \leftarrow z_s - \Delta s \,  u^\theta(z_s, s - \Delta s, s, x_t, t)$
        \State $s \leftarrow s - \Delta s$
    \EndFor
    \State $x_0 \leftarrow z_0$
    \State $x_0 \leftarrow M ^{+} y + (1 - M^{+} M) x_0$
    \State $x_{t-\Delta t} \sim p(x_{t-\Delta t}\mid \bar x_0, x_t)$ from Equation \eqref{bridge_measure}
    \State $x_t \leftarrow x_{t-\Delta t}$
    \State $t \leftarrow t - \Delta t$
\EndFor
\State
\Return $ x_0$
\end{algorithmic} 
\end{algorithm}

\section{Experimental Details}\label{App_exp_details}

The MNIST and CIFAR10 experiments use U-Net architectures using the diffusers library. 
We list some key parameters of the networks below in Table~\ref{tab:network_params}.

\begin{table}[!ht]
    \centering
    \caption{Network parameters for MNIST and CIFAR10.}
    \label{tab:network_params}
    \resizebox{\textwidth}{!}{%
    \begin{tabular}{lcc}
        \toprule
        \textbf{Feature} & \textbf{MNIST} & \textbf{CIFAR10} \\
        \midrule
        Total Params 
        & 1.1 M 
        & 37.5 M \\
        
        Down Block Types 
        & \texttt{\{DownBlock2D, AttnDownBlock2D, DownBlock2D\}} 
        & \texttt{\{DownBlock2D, AttnDownBlock2D, DownBlock2D, DownBlock2D\}} \\
        
        Up Block Types 
        & \texttt{\{UpBlock2D, AttnUpBlock2D, UpBlock2D\}} 
        & \texttt{\{UpBlock2D, UpBlock2D, AttnUpBlock2D, UpBlock2D\}} \\
        
        Block Out Channels 
        & \texttt{\{32, 64, 32\}} 
        & \texttt{\{128, 256, 256, 256\}} \\

        Layers per Block 
        & \texttt{1} 
        & \texttt{2} \\
        
        Time Embedding Type 
        & \texttt{positional} 
        & \texttt{positional} \\
        
        Activation Function 
        & \texttt{SiLu} 
        & \texttt{SiLu} \\
        
        Downsample Padding 
        & \texttt{1} 
        & \texttt{0} \\
        
        Input Sample Size 
        & \texttt{1x28x28} 
        & \texttt{3x32x32} \\
        \bottomrule
    \end{tabular}%
    }
\end{table}

We train with the following parameters in each experiment summarized in Table~\ref{tab:training_hyperparams}.

\begin{table}[!ht]
    \centering
    \caption{Training hyperparameters for MNIST and CIFAR10.}
    \label{tab:training_hyperparams}
    \begin{tabular}{lcc}
        \toprule
        \textbf{Hyperparameter} & \textbf{MNIST} & \textbf{CIFAR10} \\
        \midrule
        Learning rate & 0.0005 & 0.0001 \\
        Total number of iterations & 150K & 450K \\
        Batch size & 64 & 64 \\
        EMA decay & .9995 & .9995 \\
        \bottomrule
    \end{tabular}
\end{table}

Furthermore, following \cite{geng2025mean, Geng-easy-2025}, for each loss function $\mathcal{L}= \|\Delta\|^2$ (where $\Delta$ is, for example, the term inside the form of \eqref{CMF}), we train using the adaptive loss $\mathtt{sg}(w) \cdot \mathcal{L}$ where 
\begin{equation}
    w = \frac{1}{(\| \Delta\|^2 + c)^p},
\end{equation}
for $c=0.01$ and $p=1$. 
We use a lognorm sampler for $r, s$ with ratio $75\%$ $r\neq t$. We refer to \cite{geng2025mean} and their public code repository for further details. 

Regarding the benchmark metric calculation for the MNIST experiment, as using the features of the InceptionV3 network is unreliable \citep{song_generative_2020}, we trained a custom classifier and used its final-layer latent features for calculating the Fr\'{e}chet distance. 
Our evaluation model is a lightweight convolutional neural network consisting of two convolutional layers followed by two fully connected layers. 
To calculate the FD, we extract the 128-dimensional pre-ReLU activations from the penultimate layer, providing a more stable representation.

For the CelebA example, we use a custom DiT model from the public Mean Flow repository \footnote{\url{https://github.com/Gsunshine/meanflow}}, with \texttt{depth}=$16$, \texttt{hidden\_size}=$1024$, \texttt{patch\_size}=$2$, \texttt{num\_heads}=$16$, and a total number of $308$M trainable parameters. 
We use the pretrained autoencoder from Stable Diffusion\footnote{\url{https://huggingface.co/stabilityai/sd-vae-ft-ema}}.
We train for $400$K iterations with a batch size of $16$ and a step size of $0.0001$. 
Finally, we provide additional generated images for all of our experiments.

\subsection{Additional MNIST Images}

\begin{figure}[H]
    \centering
    \begin{subfigure}{0.48\linewidth}
        \centering
        \includegraphics[width=\linewidth]{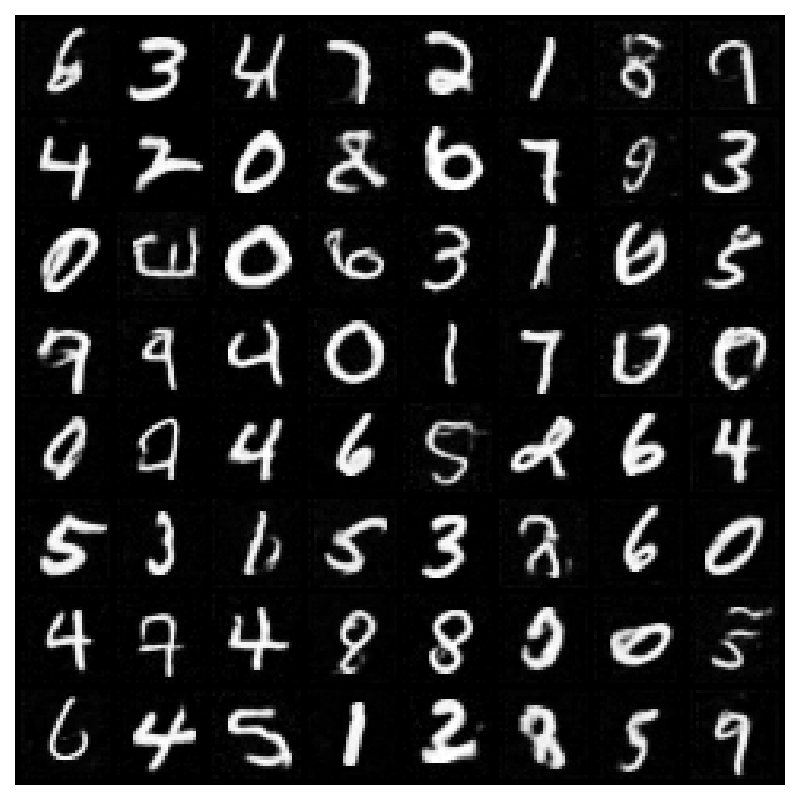}
        \caption{$n_{\mathrm{inf}}=1$, $n_{\mathrm{mf}}=1$}
        \label{fig:mnistN1n1}
    \end{subfigure}
    \hfill
    \begin{subfigure}{0.48\linewidth}
        \centering
        \includegraphics[width=\linewidth]{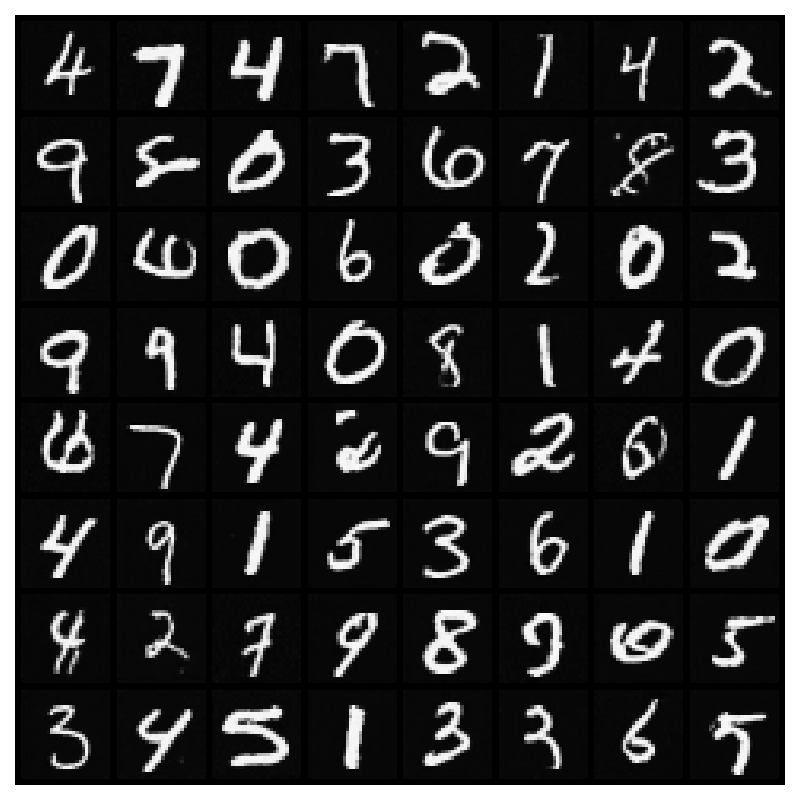}
        \caption{$n_{\mathrm{inf}}=2$, $n_{\mathrm{mf}}=2$}
        \label{fig:mnistN2n2}
    \end{subfigure}
    \caption{MNIST generation results.}
    \label{fig:mnist-generation}
\end{figure}


\subsection{Additional CIFAR10 Images}
%
\begin{figure}[H]
    \centering
    \begin{subfigure}{0.48\linewidth}
        \centering
        \includegraphics[width=\linewidth]{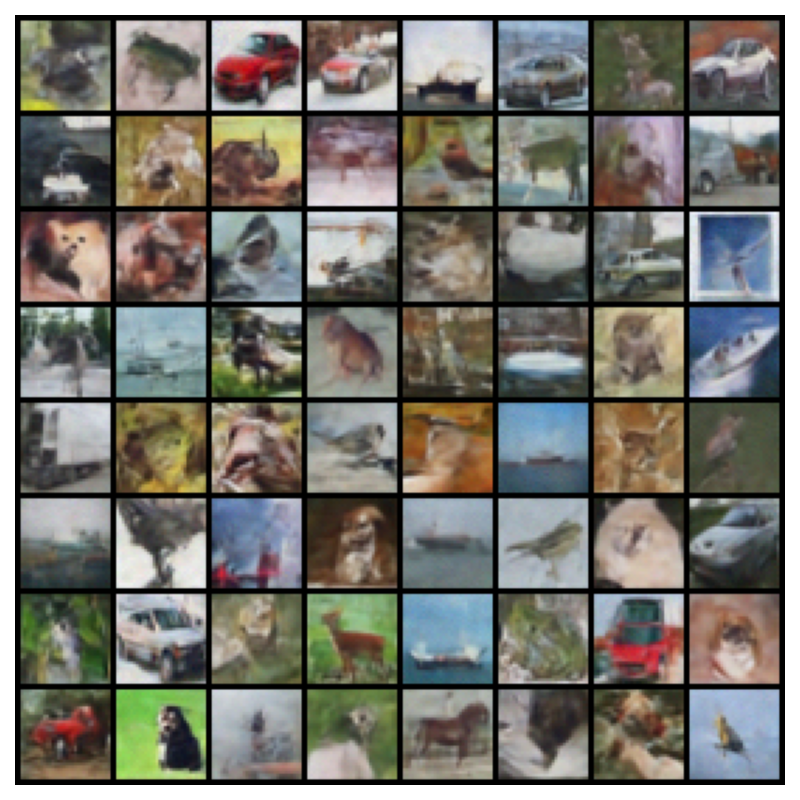}
        \caption{$n_{\mathrm{inf}}=1$, $n_{\mathrm{mf}}=1$}
        \label{fig:cifarN1n1}
    \end{subfigure}
    \hfill
    \begin{subfigure}{0.48\linewidth}
        \centering
        \includegraphics[width=\linewidth]{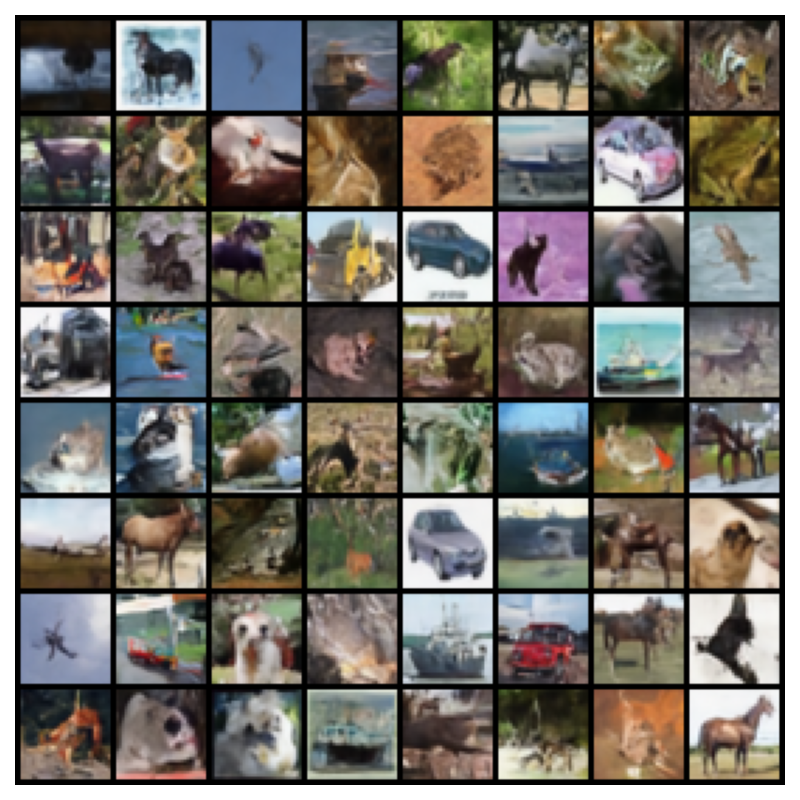}
        \caption{$n_{\mathrm{inf}}=4$, $n_{\mathrm{mf}}=2$}
        \label{fig:cifarN4n2}
    \end{subfigure}
    \caption{CIFAR10 generation results.}
    \label{fig:cifar-generation}
\end{figure}

\subsection{Additional CelebA images}
\begin{figure}[H]
    \centering
    \includegraphics[width=.8\linewidth]{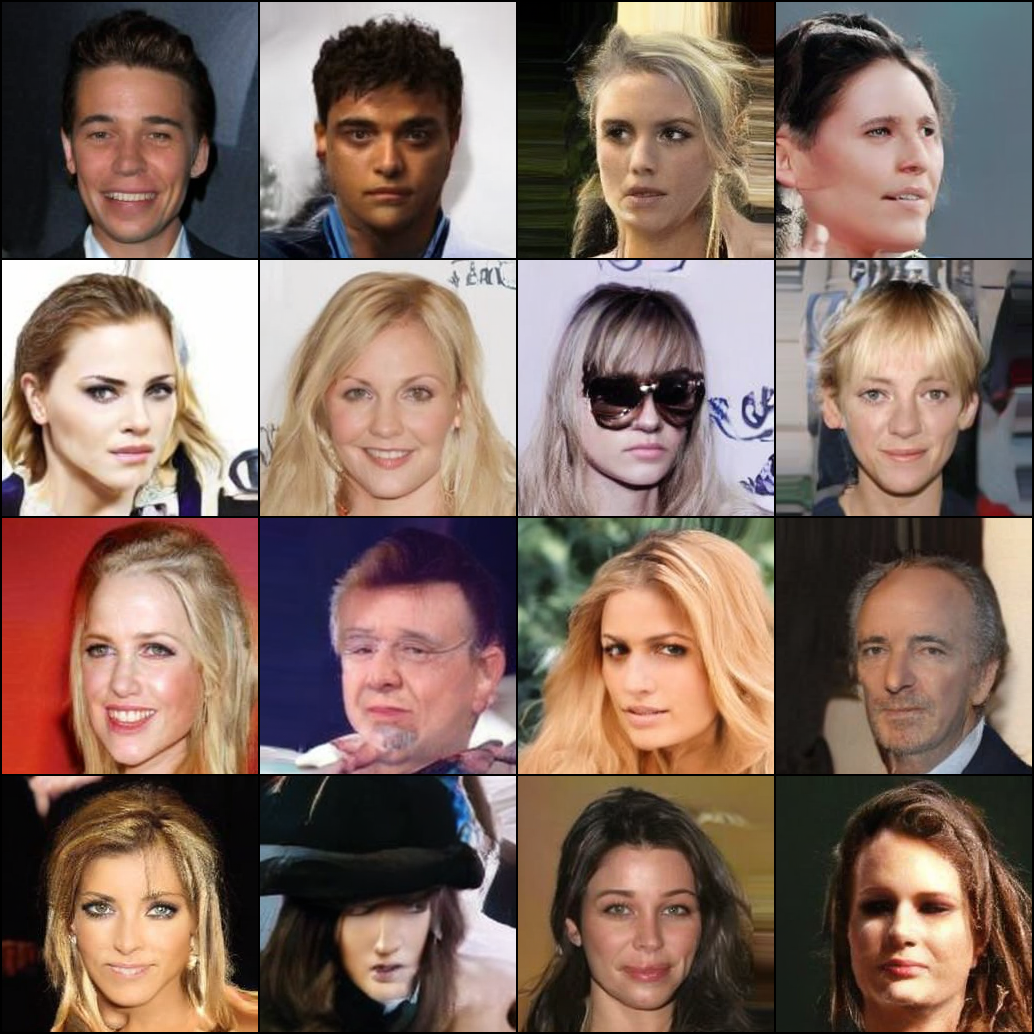}
    \caption{CelebA generation with $n_{\mathrm{inf}}=4$, $n_{\mathrm{mf}}=2$}
    \label{fig:celeba1}
\end{figure}

\end{document}